\documentclass{article} %
\usepackage{paper_format,times}

\usepackage{amsmath,amsfonts,bm}

\def\eqref#1{equation~\ref{#1}}

\def\1{\bm{1}}

\DeclareMathAlphabet{\mathsfit}{\encodingdefault}{\sfdefault}{m}{sl}
\SetMathAlphabet{\mathsfit}{bold}{\encodingdefault}{\sfdefault}{bx}{n}

\usepackage{float}

\usepackage[hidelinks]{hyperref}
\usepackage{url}
\usepackage{graphicx}
\usepackage{wrapfig}
\usepackage{amsthm}
\usepackage{booktabs}
\usepackage{multirow}
\usepackage{xcolor}
\usepackage{colortbl}
\usepackage{enumitem}
\setlist[itemize]{topsep=2pt,itemsep=3pt,parsep=0pt,partopsep=0pt}
\usepackage{array}
\usepackage{tcolorbox}
\usepackage{pifont}
\usepackage{listings}
\usepackage{tablefootnote}
\tcbuselibrary{skins,breakable,listings}

\floatstyle{ruled}
\newfloat{algorithm}{tbp}{loa}
\floatname{algorithm}{Algorithm}

\newcounter{algline}
\newlength{\algindent}
\newlength{\algcur}
\makeatletter
\newenvironment{algorithmic}[1][0]{%
  \setcounter{algline}{0}%
  \setlength{\algcur}{0pt}%
  \par\footnotesize
  \setlength{\parindent}{0pt}%
  \newcommand{\algline@print}{%
    \stepcounter{algline}%
    \makebox[1.8em][r]{\scriptsize\thealgline:}\hspace{0.5em}}%
  \newcommand{\State}{\par\noindent\algline@print\hspace{\algcur}\ignorespaces}%
  \newcommand{\algin}{\addtolength{\algcur}{\algindent}}%
  \newcommand{\algout}{\addtolength{\algcur}{-\algindent}}%
  \newcommand{\Require}{\par\noindent\makebox[1.8em][r]{}\hspace{0.5em}\textbf{Require:} \ignorespaces}%
  \newcommand{\Ensure}{\par\noindent\makebox[1.8em][r]{}\hspace{0.5em}\textbf{Ensure:} \ignorespaces}%
  \newcommand{\For}[1]{\State\textbf{for} ##1 \textbf{do}\algin}%
  \newcommand{\EndFor}{\algout\State\textbf{end for}}%
  \newcommand{\Return}{\textbf{return} \ignorespaces}%
}{\par}
\makeatother

\newtheorem{proposition}{Proposition}
\newtheorem{theorem}{Theorem}
\newtheorem{corollary}{Corollary}

\definecolor{ctrlgray}{RGB}{238,238,238}
\definecolor{oursblue}{RGB}{232,242,255}
\definecolor{labelgreen}{RGB}{234,245,243}

\definecolor{singlecapblue}{HTML}{3A82C4}
\definecolor{multicaporange}{HTML}{E7795B}

\newcommand{\ctrlc}[1]{\cellcolor{ctrlgray}{#1}}
\newcommand{\oursc}[1]{\cellcolor{oursblue}{#1}}
\newcommand{\labelc}[1]{\cellcolor{labelgreen}{#1}}
\newcommand{\std}[1]{{\,\scalebox{0.7}{\ensuremath{\pm}#1}}}

\newcolumntype{L}[1]{>{\raggedright\arraybackslash}p{#1}}

\newcommand{\cmark}{\ding{51}}
\newcommand{\xmark}{\ding{55}}
\definecolor{mathteal}{RGB}{58,130,196}
\definecolor{codeorange}{RGB}{231,121,91}
\definecolor{ifgreen}{RGB}{63,163,124}
\definecolor{okgreen}{RGB}{31,122,80}
\definecolor{badred}{RGB}{176,48,48}

\lstdefinestyle{pythonstyle}{
  language=Python,
  basicstyle=\ttfamily\scriptsize,
  keywordstyle=\color{blue!70!black}\bfseries,
  stringstyle=\color{red!60!black},
  commentstyle=\color{green!50!black}\itshape,
  breaklines=true,
  frame=none,
  resetmargins=true,
  xleftmargin=0pt,
  framexleftmargin=0pt,
  aboveskip=2pt,
  belowskip=2pt,
}

\title{Distill What You Trust: Reliability-Aware Multi-Teacher On-Policy Distillation}

\author{
Jie Sun$^{1,2,*}$, Mao Zheng$^{2,*}$, Mingyang Song$^{2,*}$, Zeyuan Liu$^3$, Gengsheng Li$^4$ \\
\textbf{Houcheng Jiang$^1$, Yilin Cheng$^5$, Bichuan Feng$^6$, Yuchen Cai$^1$, Junfeng Fang$^{7,\dagger}$, Xiang Wang$^{1,\dagger}$} \\
{$^1$University of Science and Technology of China} \\
{$^2$Foundation Model Department, Tencent \quad $^3$Tsinghua University} \\
{$^4$Institute of Automation, Chinese Academy of Sciences} \\
{$^5$Zhongguancun Academy \quad $^6$Nankai University \quad $^7$National University of Singapore} \\
{$^*$ Equal Contribution \quad $\dagger$ Corresponding authors}
}

\paperfinalcopy
\begin{document}

\maketitle

\begin{abstract}
Multi-teacher on-policy distillation allows a student to learn from complementary specialists on its own trajectories. 
Domain-routed approaches, however, select one teacher per example and keep it fixed throughout the response. 
This design both depends on domain labels that mixed training corpora often lack and cannot adapt teacher selection when the expertise required changes within a trajectory.
We observe that each specialist deviates more from a shared reference on in-domain prompts than on out-of-domain prompts, on average.
Based on this observation, we propose \textbf{TrustMOPD}, which replaces example-level teacher selection with label-free, token-level supervision allocation.
At each student-generated prefix, TrustMOPD measures this displacement in next-token preferences, calibrates its magnitude across teachers, and uses the resulting scores as proxies for local reliability to weight teacher-specific distillation losses.
Evaluated across mathematics, code, and instruction following, TrustMOPD closes 91.5\% and 98.0\% of the overall-score gap between the initial student and oracle-routed teachers when trained on \textsc{SingleCap} and \textsc{MultiCap}, respectively, compared with 54.4\% and 54.5\% for the strongest label-free baseline in each setting.
On \textsc{SingleCap}, it approaches label-based MOPD without using domain labels.
\end{abstract}

\section{Introduction}

Integrating the complementary capabilities of multiple specialized models into a single student is a fundamental challenge in building broadly capable language models~\citep{wan2024fusellm,tian2025tinyllm}.
On-policy distillation (OPD) queries a teacher along trajectories generated by the student, providing supervision at states the student actually visits~\citep{lu2025onpolicydistillation,agarwal2024onpolicy_opsd,gu2024minillm_opd,ko2024distillm_opd}.
Multi-teacher OPD (MOPD) extends this framework to complementary specialists, enabling a student to acquire expertise in mathematical reasoning, code generation, and instruction following within a unified training process~\citep{ma2026mopd,openmopd2026}.
This raises a central question: how should supervision be allocated among teachers at each generation state?

Domain-routed MOPD approaches address this question at the example level~\citep{xiao2026mimov2flash}.
As illustrated in Figure~\ref{fig:teaser}(a), they select one teacher based on the example's domain label and keep that choice fixed throughout the response~\citep{ma2026mopd,openmopd2026}.
This design assumes that an available domain label identifies a suitable teacher and that the selected teacher remains appropriate throughout the response.
These assumptions may fail when domain labels are unavailable or supervision needs change within a response.
For instance, stronger mathematical reasoning does not necessarily imply better adherence to instruction constraints~\citep{fu-etal-2026-mathif}.
The illustrative prompt in Figure~\ref{fig:teaser}(a) requires mathematical derivation, code-based verification, and JSON formatting.
A mathematics specialist may provide useful guidance during derivation without being equally suited to code verification or output formatting.
Assigning one teacher to the full response cannot accommodate such changes in supervision needs.

More fundamentally, a teacher's global domain expertise does not guarantee local supervision reliability at the current generation state~\citep{he2026mtsdpo,wang2026demystifyingopd}. 
Global expertise describes what a teacher is good at, whereas local reliability concerns whether its learning signal at the current state is suitable and worth transferring. 
In OPD, local reliability must be assessed at student-generated prefixes rather than fixed reference prefixes~\citep{fu2026revisitingopd}.
Teacher-student disagreement is also insufficient as a reliability measure: a large discrepancy may reflect useful expertise, but it may instead arise from stylistic preferences or other idiosyncratic behavior that the student need not learn~\citep{wang2026teachability,li2026rethinkingopd}. 
Allocating supervision across specialists therefore requires a state-dependent reliability estimate that is comparable across teachers and does not rely on domain labels.

\begin{figure}[t]
\centering
\includegraphics[width=\linewidth]{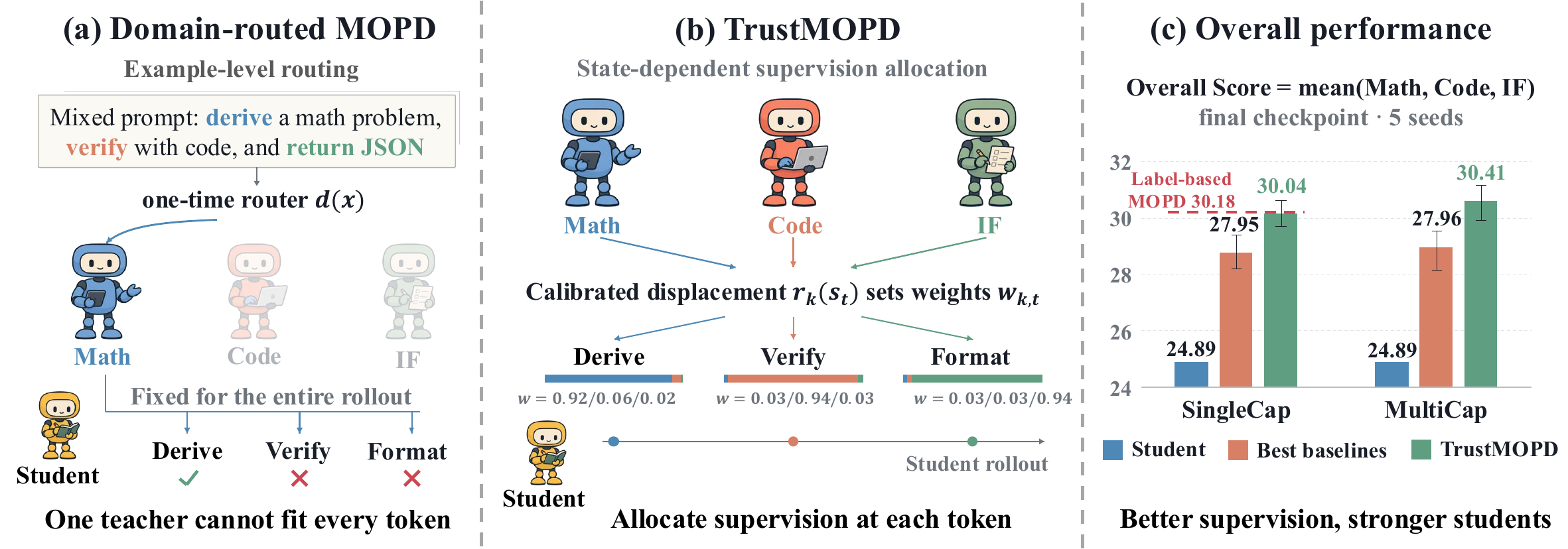}
\caption{\textbf{(a)} Domain-routed MOPD assigns one teacher per example for the full response, even when a mixed prompt requires different expertise for derivation, code verification, and output formatting.
\textbf{(b)} TrustMOPD uses calibrated teacher-reference displacement as a reliability proxy and converts the scores into token-level weights without domain labels. 
\textbf{(c)} TrustMOPD outperforms the strongest label-free baselines when trained on \textsc{SingleCap} and \textsc{MultiCap}, while approaching label-based MOPD on \textsc{SingleCap}.
Overall is the mean of mathematics, code, and instruction-following scores.
Results are means over five seeds; error bars denote standard deviations.}
\label{fig:teaser}
\end{figure}

To meet these requirements, we propose \textbf{TrustMOPD}, which allocates supervision using each teacher's displacement in next-token preferences relative to a shared reference, as illustrated in Figure~\ref{fig:teaser}(b).
Our empirical analysis in Figure~\ref{fig:allocation-analysis}(a) shows that each teacher's mean displacement is largest on prompts from its own domain.
However, raw magnitudes differ across teachers: on mathematics prompts, the code teacher's displacement exceeds that of the mathematics teacher.
We therefore divide each teacher's displacement by its fixed offline mean to calibrate the scores across teachers.
The resulting scores favor the domain-matched teacher on average, motivating their use as proxies for local supervision reliability.
At each student-generated prefix, TrustMOPD converts the calibrated scores into supervision weights, assigning greater weight to teachers with higher scores.
These weights combine teacher-specific distillation losses into a single training objective, allowing supervision allocation to adapt along the trajectory without domain labels or a learned router.
Our theoretical analysis relates teacher-reference displacement to changes in next-token log odds and provides a continuation-utility interpretation under an idealized KL-regularized optimum.

We evaluate TrustMOPD with three specialists in mathematics, code, and instruction following using two training sets: \textsc{SingleCap}, containing single-capability prompts, and \textsc{MultiCap}, containing both single- and multi-capability prompts. Figure~\ref{fig:teaser}(c) summarizes the overall performance.
On \textsc{SingleCap}, TrustMOPD closes $91.5\%$ of the overall-score gap between the initial student and oracle-routed teachers, versus $54.4\%$ for the strongest label-free baseline, approaching label-based MOPD without domain labels.
On \textsc{MultiCap}, where most prompts combine two capabilities and lack a unique domain label, TrustMOPD increases the recovery ratio from $54.5\%$ for the strongest label-free baseline to $98.0\%$. 
Further ablations indicate that calibration and token-level allocation are particularly beneficial when training prompts combine multiple capabilities.
Together, these results support label-free, state-dependent supervision allocation as an effective approach to integrating complementary expertise from multiple teachers.

\section{Preliminaries}
\label{sec:preliminaries}

On-policy distillation (OPD) trains a student on its own rollouts by querying the teacher at student-generated prefixes~\citep{lu2025onpolicydistillation,agarwal2024onpolicy_opsd,gu2024minillm_opd,ko2024distillm_opd}. Let $\mathcal{D}$ denote the input distribution, $\pi_\theta$ the student model, and $\{\pi_k\}_{k=1}^{K}$ a collection of $K$ teachers. Given an input $x\sim\mathcal{D}$, the student samples a variable-length response $y\sim\pi_\theta(\cdot\mid x)$, whose length is denoted by $|y|$. At position $t\in\{1,\ldots,|y|\}$, the student-generated state is $s_t=(x,y_{<t})$, where $y_{<t}=(y_1,\ldots,y_{t-1})$ is the generated prefix. We write $p_{\theta,t}=\pi_\theta(\cdot\mid s_t)$ and $q_{k,t}=\pi_k(\cdot\mid s_t)$ for the next-token distributions of the student and teacher $k$, respectively.

Domain-routed multi-teacher OPD uses each training input's domain label to select a teacher~\citep{ma2026mopd}.
Indexing each domain by its corresponding teacher, let $d(x)\in\{1,\ldots,K\}$ denote the domain label of input $x$. 
The student is trained to minimize
\begin{equation}
\mathcal{L}_{\mathrm{MOPD}}(\theta)
=
\mathbb{E}_{x\sim\mathcal{D},\,y\sim\pi_\theta(\cdot\mid x)}
\left[
\frac{1}{|y|}
\sum_{t=1}^{|y|}
D_{\mathrm{KL}}
\left(
p_{\theta,t}\,\|\,q_{d(x),t}
\right)
\right].
\label{eq:mopd}
\end{equation}
Thus, all tokens in a response are distilled from the same domain-specific teacher, using reverse KL at student-generated prefixes~\citep{gu2024minillm_opd,openmopd2026}.

\section{TrustMOPD: Reliability-Aware Supervision Allocation}
\label{sec:method}

TrustMOPD computes and calibrates a reference-relative displacement score for each teacher at every student-generated state, then uses these scores as proxies for local reliability to allocate supervision.
The resulting weights determine each teacher's contribution to student training, enabling adaptive supervision without domain labels or a learned router.

\subsection{Reference-Relative Displacement as a Proxy for Local Reliability}
\label{sec:method-displacement}

We use domain specialists obtained by applying domain-specific RL to a shared initial policy $\pi_{\mathrm{ref}}$ as teachers, with $\pi_{\mathrm{ref}}$ serving as the reference for measuring their changes in next-token preferences.
Figure~\ref{fig:allocation-analysis}(a) shows that each teacher's mean displacement is largest on its own domain, supporting a connection between reference-relative policy change and specialization.
We therefore use displacement as the basis of a local reliability proxy, with cross-teacher calibration introduced in \S\ref{sec:method-weight}.
Unlike teacher--student divergence, which can reflect both expertise gaps and stylistic differences, teacher--reference displacement measures the changes induced by specialist training.

Let $\mathcal{V}$ denote the vocabulary and $q_{\mathrm{ref},t}=\pi_{\mathrm{ref}}(\cdot\mid s_t)$ the reference's next-token distribution at state $s_t$.
We define the centered reference-relative displacement $d_{k,t}\in\mathbb{R}^{|\mathcal{V}|}$ and its magnitude $\rho_{k,t}$ by
\begin{equation}
\begin{aligned}
d_{k,t}(v)
&:= \log\frac{q_{k,t}(v)}{q_{\mathrm{ref},t}(v)}
-\frac{1}{|\mathcal{V}|}
\sum_{v'\in\mathcal{V}}
\log\frac{q_{k,t}(v')}{q_{\mathrm{ref},t}(v')},\quad 
\rho_{k,t}:= \lVert d_{k,t}\rVert_2 .
\end{aligned}
\label{eq:method-displacement}
\end{equation}
The mean is taken over the vocabulary at the same state.
Centering removes the token-independent component of the log-policy ratio, which carries no information about relative next-token preferences.
The Euclidean norm then discards the direction of the signed displacement while retaining its magnitude, yielding one nonnegative score per teacher.
Because both distributions are conditioned on the student's current prefix, $\rho_{k,t}$ can vary across positions within a response.
Further analysis in \S\ref{sec:properties} provides an exact log-odds interpretation independent of the teacher's training objective and a utility-contrast interpretation under an idealized KL-regularized optimum.

\subsection{From Displacement to Supervision Weights}
\label{sec:method-weight}

Although $\rho_{k,t}$ measures how strongly teacher $k$ departs from the reference at state $s_t$, its absolute scale is not directly comparable across teachers. 
Independent RL runs may differ in reward scale, regularization strength, and update magnitude, leading to systematic differences in displacement scale across teachers.
For example, Figure~\ref{fig:allocation-analysis}(a) reports mean raw displacements of $6.33$ for the code teacher and $4.08$ for the domain-matched mathematics teacher on mathematics prompts.
As illustrated in Figure~\ref{fig:framework}(b), we normalize each displacement by the teacher's token-weighted mean magnitude on a fixed calibration distribution.
Let $\pi_{\theta_0}$ denote the student before distillation, $\mathcal{D}_{\mathrm{cal}}$ an unlabeled calibration sample, and $\mathbb{E}_{\mathrm{cal}}$ the expectation over $x\sim\mathcal{D}_{\mathrm{cal}}$ and $y\sim\pi_{\theta_0}(\cdot\mid x)$. We define
\begin{wrapfigure}{r}{0.4\linewidth}
\centering
\includegraphics[width=\linewidth]{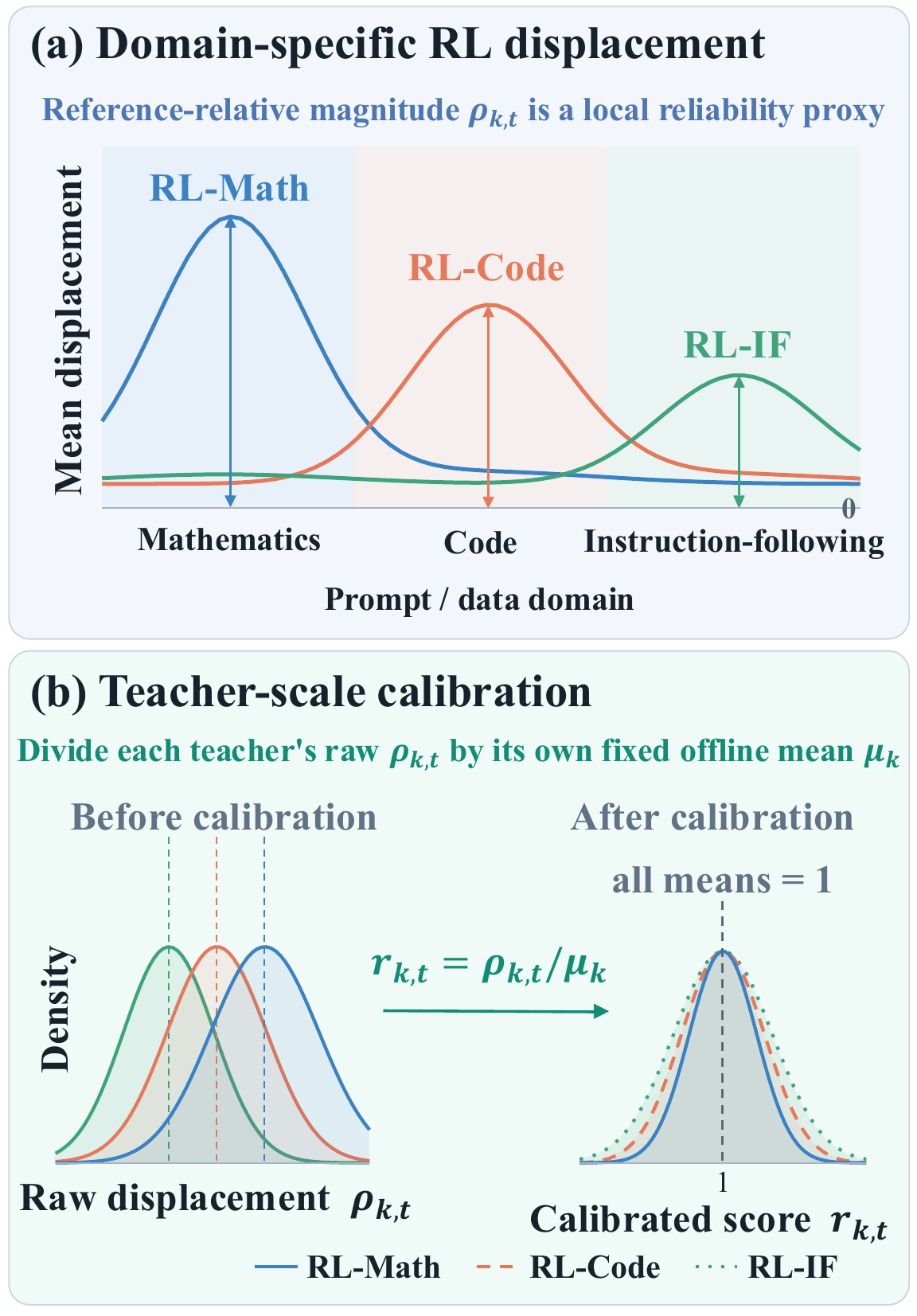}
\caption{\textbf{(a)} Schematic of domain-dependent teacher-reference displacement, with different scales across teachers.
\textbf{(b)} Dividing each teacher's displacement by its fixed calibration mean $\mu_k$ gives each teacher's scores a token-weighted mean of one on the calibration distribution.}
\vspace{-4mm}
\label{fig:framework}
\end{wrapfigure}
\begin{equation}
\mu_k
:=
\frac{
\mathbb{E}_{\mathrm{cal}}
\left[\sum_{t=1}^{|y|}\rho_{k,t}\right]
}{
\mathbb{E}_{\mathrm{cal}}\left[|y|\right]
},
\qquad
r_{k,t}:=\frac{\rho_{k,t}}{\mu_k}.
\label{eq:method-scale}
\end{equation}
The calibration constants $\{\mu_k\}_{k=1}^{K}$ are estimated once and fixed throughout distillation.
Each teacher therefore has unit token-weighted mean score on the calibration distribution, with $r_{k,t}>1$ indicating above-baseline displacement.
This normalization cancels any teacher-specific multiplicative factor shared by the calibration and current states.
Cross-teacher comparison is relative to the chosen calibration distribution and requires no domain labels.

TrustMOPD converts the calibrated scores into supervision weights through power normalization:
\begin{equation}
w_{k,t}
:=
\frac{r_{k,t}^{\,\gamma}}
{\sum_{j=1}^{K}r_{j,t}^{\,\gamma}},
\qquad
\sum_{k=1}^{K}w_{k,t}=1,
\label{eq:method-weight}
\end{equation}
where $\gamma>0$ controls how strongly differences in calibrated scores affect supervision allocation.
We use $\gamma>1$ to amplify relative differences in calibrated scores and concentrate supervision on higher-scoring teachers.
Larger values of $\gamma$ produce more concentrated allocations.

\subsection{Reliability-Weighted Multi-Teacher Distillation}
\label{sec:method-final}
We extend \eqref{eq:mopd} to a weighted sum of teacher-specific distillation losses at each student-generated state:
\begin{equation}
\mathcal{L}_{\mathrm{MOPD}}(\theta;w)
:=
\mathbb{E}_{\substack{x\sim\mathcal{D}\\
y\sim\pi_\theta(\cdot\mid x)}}
\left[
\frac{1}{|y|}
\sum_{t=1}^{|y|}
\sum_{k=1}^{K}
w_{k,t}\,
D_{\mathrm{KL}}
\left(
p_{\theta,t}\,\|\,q_{k,t}
\right)
\right],
\qquad
w_{k,t}\geq 0,\quad
\sum_{k=1}^{K}w_{k,t}=1.
\label{eq:weighted-mopd}
\end{equation}
Setting $w_{k,t}=\mathbb{I}\{k=d(x)\}$ at every position recovers the domain-routed MOPD objective in \eqref{eq:mopd}.
TrustMOPD instead replaces these one-hot weights with the reliability weights defined in Section~\ref{sec:method-weight}:
\begin{equation}
\underbrace{
w_{k,t}^{\mathrm{MOPD}}
=
\mathbb{I}\{k=d(x)\}
}_{\text{label-based}}
\quad\longrightarrow\quad
\underbrace{
w_{k,t}^{\mathrm{Trust}}
=
\frac{\left(\rho_{k,t}/\mu_k\right)^\gamma}
{\sum_{j=1}^{K}\left(\rho_{j,t}/\mu_j\right)^\gamma}
}_{\text{label-free}}.
\label{eq:method-weight-replacement}
\end{equation}
Substituting $w^{\mathrm{Trust}}$ into \eqref{eq:weighted-mopd} defines $\mathcal{L}_{\mathrm{TrustMOPD}}(\theta)$.
The weights adapt to each student-generated state while keeping the total teacher weight at one.
Their computation requires no domain labels or a learned router.
The weights are computed on each sampled rollout and treated as constants during the corresponding update via stop-gradient.

\subsection{Properties of the TrustMOPD Allocation}
\label{sec:properties}

For positive teacher and reference distributions, $d_{k,t}(v)-d_{k,t}(u)$ exactly equals the teacher--reference change in log odds between tokens $v$ and $u$, independently of the training objective.
The following results give a utility interpretation; proofs and further discussion appear in Appendix~\ref{apd:theory}.

\begin{theorem}[Utility-contrast recovery]
\label{thm:utility-contrast}
Assume each teacher $\pi_k$ is Bellman-optimal at all calibration and distillation states for a finite-horizon problem with finite rewards and a per-state KL penalty relative to the full-support reference $\pi_{\mathrm{ref}}$, with fixed coefficient $\beta_k>0$.
Let $Q_k^\star(s_t,v)$ denote the current token reward plus optimal KL-regularized future value,
$\bar Q_{k,t}^\star$ its vocabulary mean, and $\mathbf{1}$ the all-ones vector.
Then
\begin{equation}
d_{k,t}(v)
=
\frac{Q_k^\star(s_t,v)-\bar Q_{k,t}^\star}{\beta_k},
\qquad
\rho_{k,t}
=
\frac{
\lVert Q_k^\star(s_t,\cdot)-\bar Q_{k,t}^\star\mathbf{1}\rVert_2
}{\beta_k}.
\label{eq:rl-utility-contrast}
\end{equation}
\end{theorem}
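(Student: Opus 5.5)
The plan is to derive the closed form of the Bellman-optimal KL-regularized policy at a single state and then substitute it into the definition of $d_{k,t}$ in \eqref{eq:method-displacement}.

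\textbf{Setup.} Fix teacher $k$ and a state $s_t$. Write $V_k^\star(s_t)$ for the optimal KL-regularized value at $s_t$. The per-state objective that $\pi_k(\cdot\mid s_t)$ maximizes is
\begin{equation*}
\sum_{v} \pi(v\mid s_t)\, Q_k^\star(s_t,v) \;-\; \beta_k\, D_{\mathrm{KL}}\bigl(\pi(\cdot\mid s_t)\,\|\,\pi_{\mathrm{ref}}(\cdot\mid s_t)\bigr).
\end{equation*}
Here $Q_k^\star(s_t,v)$ is the immediate reward of emitting $v$ plus $V_k^\star$ at the successor state $(x,y_{<t},v)$, or zero if the episode terminates. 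Finiteness of the horizon and of the rewards guarantees, by backward induction, that every $Q_k^\star$ and $V_k^\star$ is finite. The optimization is over the simplex, so a Lagrangian or Gibbs variational argument yields the unique maximizer
\begin{equation*}
\pi_k(v\mid s_t) = \frac{\pi_{\mathrm{ref}}(v\mid s_t)\,\exp\bigl(Q_k^\star(s_t,v)/\beta_k\bigr)}{Z_k(s_t)}.
\end{equation*}
This follows from the Donsker--Varadhan identity: the objective equals $\beta_k\log Z_k$ minus $\beta_k$ times the KL divergence to this Gibbs distribution. The full-support assumption on $\pi_{\mathrm{ref}}$ makes $Z_k>0$ and the teacher positive everywhere. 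The Bellman-optimality assumption at the calibration and distillation states is exactly what licenses identifying $q_{k,t}$ with this Gibbs form at the states $s_t$ where $d_{k,t}$ is evaluated.

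\textbf{Substitution.} Taking logs gives
\begin{equation*}
\log\frac{q_{k,t}(v)}{q_{\mathrm{ref},t}(v)} = \frac{Q_k^\star(s_t,v)}{\beta_k} - \log Z_k(s_t).
\end{equation*}
The second term is independent of $v$, so the vocabulary-centering in \eqref{eq:method-displacement} cancels it. Averaging the first term over $\mathcal{V}$ gives $\bar Q^\star_{k,t}/\beta_k$, which yields the first identity. Taking the Euclidean norm, and using $\beta_k>0$, gives the expression for $\rho_{k,t}$.

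\textbf{Main obstacle.} The only delicate step is justifying the per-state Gibbs characterization, namely that Bellman optimality of the whole-sequence KL-regularized problem decomposes into this one-step problem with $Q_k^\star$ defined as stated. I would handle it by backward induction on the remaining horizon, using the finiteness of values at each step. The centering and norm computations are then routine.
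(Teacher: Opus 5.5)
Your proposal is correct and follows essentially the same route as the paper. The paper first proves a lemma giving the same Gibbs form $\pi_k^\star(v\mid s)=\pi_{\mathrm{ref}}(v\mid s)\exp\bigl((Q_k^\star(s,v)-V_k^\star(s))/\beta_k\bigr)$, using the identity that the per-state objective equals $\beta_k c_s-\beta_k\,\mathrm{KL}(q\,\|\,\widehat q)$; it then takes logs and centers over the vocabulary so that the state-value term (your $\log Z_k$) cancels. The only difference is that the paper starts directly from the per-state Bellman optimality equation, whereas you propose to justify that one-step decomposition by backward induction.
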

Theorem~\ref{thm:utility-contrast} therefore interprets $\rho_{k,t}$ as measuring how strongly continuation utility varies across next-token choices for a given teacher.

\begin{corollary}[Relative-utility allocation]
\label{cor:relative-utility}
Under the assumptions of Theorem~\ref{thm:utility-contrast}, let
$\sigma_{k,t}:=\lVert Q_k^\star(s_t,\cdot)-\bar Q_{k,t}^\star\mathbf{1}\rVert_2$
and let $\bar\sigma_k$ be its token-weighted calibration mean, defined as in \eqref{eq:method-scale}.
If $\bar\sigma_k>0$ for every teacher, then at any state where $\sum_j\sigma_{j,t}>0$,
\begin{equation}
r_{k,t}
=
\frac{\sigma_{k,t}}{\bar\sigma_k},
\qquad
w_{k,t}^{\mathrm{Trust}}
=
\frac{(\sigma_{k,t}/\bar\sigma_k)^\gamma}
{\sum_{j=1}^{K}(\sigma_{j,t}/\bar\sigma_j)^\gamma}.
\label{eq:relative-utility-allocation}
\end{equation}
\end{corollary}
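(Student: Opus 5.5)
The plan is to derive the corollary directly from Theorem~\ref{thm:utility-contrast}, which gives $\rho_{k,t}=\sigma_{k,t}/\beta_k$ at every calibration and distillation state. The key observation is that the KL coefficient $\beta_k$ is a teacher-specific constant. It does not depend on the state, so it multiplies both the numerator $\rho_{k,t}$ and the calibration constant $\mu_k$, and it therefore cancels in $r_{k,t}$. This is exactly the ``teacher-specific multiplicative factor shared by the calibration and current states'' that the normalization in \eqref{eq:method-scale} was designed to remove.

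First I would compute $\mu_k$. The theorem's assumptions hold at all calibration states, so I can substitute $\rho_{k,t}=\sigma_{k,t}/\beta_k$ inside the expectation defining $\mu_k$. Linearity of expectation then lets me pull out $1/\beta_k$:
\begin{equation*}
\mu_k=\frac{1}{\beta_k}\,\frac{\mathbb{E}_{\mathrm{cal}}\bigl[\sum_{t=1}^{|y|}\sigma_{k,t}\bigr]}{\mathbb{E}_{\mathrm{cal}}[|y|]}=\frac{\bar\sigma_k}{\beta_k}.
\end{equation*}
This step requires the expectations to be finite. That follows because rewards are finite and the horizon is finite, so $Q_k^\star$ and hence $\sigma_{k,t}$ are bounded. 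The hypothesis $\bar\sigma_k>0$ together with $\beta_k>0$ gives $\mu_k>0$, so $r_{k,t}$ is well defined. Dividing, I get $r_{k,t}=(\sigma_{k,t}/\beta_k)/(\bar\sigma_k/\beta_k)=\sigma_{k,t}/\bar\sigma_k$, which is the first identity.

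Second, I would substitute this expression into \eqref{eq:method-weight} to obtain the formula for $w^{\mathrm{Trust}}_{k,t}$. The only care needed is that the denominator is nonzero. Every $r_{j,t}\ge 0$, and $\sum_j\sigma_{j,t}>0$ means some $\sigma_{j,t}>0$, so the corresponding $r_{j,t}^\gamma>0$ because $\gamma>0$. Hence $\sum_j r_{j,t}^\gamma>0$ and the weights are well defined. This justifies restricting the claim to states with $\sum_j\sigma_{j,t}>0$.

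I expect no real obstacle. The only delicate point is the interchange in the $\mu_k$ computation, which needs Theorem~\ref{thm:utility-contrast} to hold on the whole support of the calibration rollouts and needs the integrability noted above. Both are covered by the stated assumptions.
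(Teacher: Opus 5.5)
Your proposal is correct and follows essentially the same route as the paper: substitute $\rho_{k,t}=\sigma_{k,t}/\beta_k$ into the calibration mean to get $\mu_k=\bar\sigma_k/\beta_k$, cancel the state-independent $\beta_k$ in $r_{k,t}$, and use $\sum_j\sigma_{j,t}>0$ with $\gamma>0$ to guarantee a nonzero normalizer. Your integrability remark is extra care the paper omits, and the paper additionally notes that $z\mapsto z^\gamma$ preserves the ranking of calibrated scores, which is not needed for the displayed identities.
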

Thus, calibration cancels the explicit $1/\beta_k$ factor, and supervision follows utility-contrast magnitudes relative to each teacher's calibration baseline.

\section{Experiments}
\label{sec:experiments}

\subsection{Setup}
\label{sec:exp-setup}

\paragraph{Models.}
By default, we use the SmolLM3-3B checkpoints released by OpenMOPD~\citep{bakouch2025smollm3,openmopd2026}.
The mixed-domain SFT checkpoint serves as both the student initialization $\pi_{\theta_0}$ and the shared reference $\pi_{\mathrm{ref}}$.
The $K=3$ teachers are domain-specific RL fine-tunes of this checkpoint for mathematics, code, and instruction following.
Appendix~\ref{apd:exp-models} lists all model checkpoints and links.

\paragraph{Training data.}
We construct two training sets of $7{,}458$ prompts each, excluding evaluation data from both.
\textsc{SingleCap}, sampled from OpenMOPD~\citep{openmopd2026}, contains $1{,}540$ mathematics, $1{,}870$ code, and $4{,}048$ instruction-following prompts, each with a unique domain label.
Inspired by MathIF~\citep{fu-etal-2026-mathif}, we construct \textsc{MultiCap} with both single-capability prompts and prompts combining two capabilities: mathematics with instruction-following constraints, code with instruction-following constraints, and mathematics--code compositions.
For these mixed-capability prompts, no single domain label fully captures the expertise required for teacher selection.
We compare methods only within each training set.
Appendix~\ref{apd:exp-data} details data construction and filtering.

\paragraph{Baselines.}
All distillation methods share the same training protocol and differ only in teacher supervision allocation.
We compare TrustMOPD with \textsc{Uniform} weighting ($w_{k,t}=1/K$), \textsc{Random} weighting sampled independently at each token from the uniform distribution over the teacher simplex, and three single-teacher variants that use the mathematics, code, or instruction-following teacher throughout training.
The Dirichlet distribution is uniform over nonnegative teacher-weight vectors that sum to one.
On \textsc{SingleCap}, we additionally evaluate \textsc{MOPD}~\citep{ma2026mopd}, which selects one teacher per response using the ground-truth domain label.
We omit this baseline on \textsc{MultiCap}, where mixed-capability prompts lack a unique domain label.
We also report three training-free references: the initial student, the arithmetic parameter average of the three teachers~\citep{wortsman2022soups}, and \textsc{Routed Teachers}, which directly uses the corresponding RL teacher to generate responses for each evaluation domain.
Baseline details appear in Appendix~\ref{apd:exp-baselines}.

\paragraph{Evaluation.}
We evaluate mathematics on AIME25~\citep{maa2025aime} and AIME26~\citep{maa2026aime}, code on LiveCodeBench v5/v6~\citep{jain2024livecodebench}, and instruction following on IFEval~\citep{zhou2023IFEval} and IFBench~\citep{pyatkin2025IFBench}.
For each example, we sample $16$ responses for mathematics, $5$ for code, and $1$ for instruction following, all at a temperature of $0.6$. Scores are averaged over the sampled responses.
Each domain score is the mean of its two benchmark scores; \emph{overall} is the unweighted mean across domains.
For trained methods, we evaluate the final checkpoint from each of five training seeds and report the mean $\pm$ standard deviation across seeds.
No evaluation set is used for checkpoint selection.
Additional experimental details, hyperparameters, algorithm pseudocode, and computing cost appear in Appendix~\ref{apd:exp-training}--\ref{apd:exp-algorithms},~\ref{apd:cost}.

\subsection{Overall Effectiveness}
\label{sec:exp-main}

\newcommand{\nostd}{\phantom{\std{0.00}}}

\begin{table}[t]
\centering
\footnotesize
\setlength{\tabcolsep}{5pt}
\renewcommand{\arraystretch}{0.82}
\caption{
For trained methods, results are mean $\pm$ standard deviation over five seeds, using the final checkpoint from each run.
\textbf{Bold} and \underline{underlined} values mark the best and second-best label-free methods within each block; \colorbox{labelgreen}{green} rows use ground-truth domain labels and are excluded from ranking.
$\Delta_{\mathrm{Init}}$ is the overall-score change from the initial student.
The ``teacher alone'' rows evaluate each specialist directly; their domain-matched scores form the \textsc{Routed Teachers} row.
}
\label{tab:main}

\begin{tabular}{@{}c@{\hspace{6pt}}lccccc}
\toprule
\multirow{2.4}{*}{Data}
& \multirow{2.4}{*}{Method / allocation rule}
& \multicolumn{1}{c}{Math}
& \multicolumn{1}{c}{Code}
& \multicolumn{1}{c}{IF}
& \multirow{2.4}{*}{Overall}
& \multirow{2.4}{*}{$\Delta_{Init}$} \\
\cmidrule(lr){3-3}
\cmidrule(lr){4-4}
\cmidrule(lr){5-5}
&
& {\small AIME 25/26}
& {\small LCB v5/v6}
& {\small IFEval/IFBench}
& & \\

\midrule
\multicolumn{7}{l}{\emph{No-distillation references}} \\

& \ctrlc{Student initialization}
& \ctrlc{17.02\nostd}
& \ctrlc{15.97\nostd}
& \ctrlc{41.66\nostd}
& \ctrlc{24.89\nostd}
& \ctrlc{$\phantom{+}0.00$} \\

& Math teacher alone \quad $\pi_{\mathrm{math}}$
& \labelc{22.50\nostd}
& 17.26\nostd
& 41.16\nostd
& 26.97\nostd
& $+2.08$ \\

& Code teacher alone \quad $\pi_{\mathrm{code}}$
& 20.62\nostd
& \labelc{21.34\nostd}
& 41.55\nostd
& 27.84\nostd
& $+2.95$ \\

& IF teacher alone \quad $\pi_{\mathrm{IF}}$
& 15.83\nostd
& 14.92\nostd
& \labelc{47.73\nostd}
& 26.16\nostd
& $+1.27$ \\

& \labelc{Oracle-routed teachers}
& \labelc{22.50\nostd}
& \labelc{21.34\nostd}
& \labelc{47.73\nostd}
& \labelc{30.52\nostd}
& \labelc{$+5.63$} \\

& Parameter-averaged teacher
& 19.29\nostd
& 16.84\nostd
& 43.86\nostd
& 26.66\nostd
& $+1.77$ \\

\midrule
\multirow{7}{*}{\rotatebox[origin=c]{90}{\textsc{SingleCap}}}

& \labelc{MOPD\tablefootnote{Our reported MOPD results use the OpenMOPD~\citep{openmopd2026} implementation.} \quad $\mathbf{e}_{z(x)}$}
& \labelc{22.58\std{0.77}}
& \labelc{20.48\std{0.39}}
& \labelc{47.49\std{1.37}}
& \labelc{30.18\std{0.25}}
& \labelc{$+5.29$} \\
\cmidrule(l){2-7}

& Uniform \quad $\tfrac{1}{K}\mathbf{1}$
& 20.14\std{0.23}
& 18.67\std{0.38}
& 45.04\std{1.23}
& \underline{27.95}\std{0.59}
& $+3.06$ \\

& Random \, $\operatorname{Dirichlet}(1,\ldots,1)$
& 20.22\std{0.95}
& 18.31\std{0.59}
& 44.62\std{0.99}
& 27.72\std{0.72}
& $+2.83$ \\

& Single teacher \quad $\mathbf{e}_{\mathrm{math}}$
& \underline{21.80}\std{0.61}
& 17.01\std{0.45}
& 40.80\std{0.73}
& 26.54\std{0.29}
& $+1.65$ \\

& Single teacher \quad $\mathbf{e}_{\mathrm{code}}$
& 20.44\std{1.48}
& \textbf{20.77}\std{0.56}
& 42.17\std{0.71}
& 27.79\std{0.55}
& $+2.90$ \\

& Single teacher \quad $\mathbf{e}_{\mathrm{IF}}$
& 16.41\std{0.50}
& 15.60\std{0.51}
& \textbf{48.33}\std{0.53}
& 26.78\std{0.13}
& $+1.89$ \\

& \oursc{TrustMOPD \quad $\propto \mathbf{r}_t^{\gamma}$}
& \oursc{\textbf{21.91}\std{0.61}}
& \oursc{\underline{20.67}\std{0.75}}
& \oursc{\underline{47.56}\std{0.48}}
& \oursc{\textbf{30.04}\std{0.47}}
& \oursc{$+5.15$} \\

\midrule
\multirow{6}{*}{\rotatebox[origin=c]{90}{\textsc{MultiCap}}}

& Uniform \quad $\tfrac{1}{K}\mathbf{1}$
& 20.20\std{0.43}
& 18.87\std{0.52}
& 43.88\std{1.42}
& 27.65\std{0.53}
& $+2.76$ \\

& Random \, $\operatorname{Dirichlet}(1,\ldots,1)$
& 20.24\std{1.13}
& 18.95\std{0.66}
& 44.69\std{1.20}
& \underline{27.96}\std{0.49}
& $+3.07$ \\

& Single teacher \quad $\mathbf{e}_{\mathrm{math}}$
& \underline{21.88}\std{0.66}
& 17.20\std{0.70}
& 41.80\std{1.28}
& 26.96\std{0.55}
& $+2.07$ \\

& Single teacher \quad $\mathbf{e}_{\mathrm{code}}$
& 20.18\std{0.94}
& \textbf{21.25}\std{0.37}
& 41.83\std{1.71}
& 27.75\std{0.69}
& $+2.86$ \\

& Single teacher \quad $\mathbf{e}_{\mathrm{IF}}$
& 15.52\std{0.87}
& 14.70\std{0.80}
& \underline{46.98}\std{0.80}
& 25.73\std{0.38}
& $+0.84$ \\

& \oursc{TrustMOPD \quad $\propto \mathbf{r}_t^{\gamma}$}
& \oursc{\textbf{22.92}\std{1.40}}
& \oursc{\underline{20.69}\std{0.50}}
& \oursc{\textbf{47.62}\std{1.41}}
& \oursc{\textbf{30.41}\std{0.60}}
& \oursc{$+5.52$} \\

\bottomrule
\end{tabular}
\vspace{-3mm}
\end{table}

Table~\ref{tab:main} evaluates two settings: \textsc{SingleCap} tests whether TrustMOPD can approach label-based routing without using domain labels, while \textsc{MultiCap} tests its effectiveness when training prompts combine capabilities.
The ``teacher alone'' rows show that no specialist's score on any other evaluated domain falls more than $1.3$ points below the initial student's, indicating limited cross-domain degradation in this setup.

\noindent\(\bullet\)\enspace
\textbf{Obs.\ 1: TrustMOPD approaches label-based routing on \textsc{SingleCap} without domain labels.}
TrustMOPD achieves $30.04\pm0.47$ overall, $0.14$ points below label-based MOPD and $2.09$ points above the strongest label-free baseline, \textsc{Uniform}.
This closes $91.5\%$ of the overall-score gap between the initial student and \textsc{Routed Teachers}, compared with $54.4\%$ for \textsc{Uniform}.
TrustMOPD improves scores in all three domains over the initial student.

\noindent\(\bullet\)\enspace
\textbf{Obs.\ 2: TrustMOPD remains effective with mixed-capability training data.}
On \textsc{MultiCap}, TrustMOPD reaches $30.41\pm0.60$ overall, exceeding the strongest label-free baseline, \textsc{Random}, by $2.45$ points.
This closes $98.0\%$ of the overall-score gap between the initial student and \textsc{Routed Teachers}, compared with $54.5\%$ for \textsc{Random}.
TrustMOPD outperforms the initial student across all domains and achieves the highest label-free scores in mathematics and instruction following.

\subsection{Analyzing State-Dependent Supervision Allocation}
\label{sec:exp-allocation}

\refstepcounter{footnote}
\edef\mathcodefootnotenumber{\number\value{footnote}}

\begin{figure}[t]
    \centering
    \includegraphics[width=\linewidth]{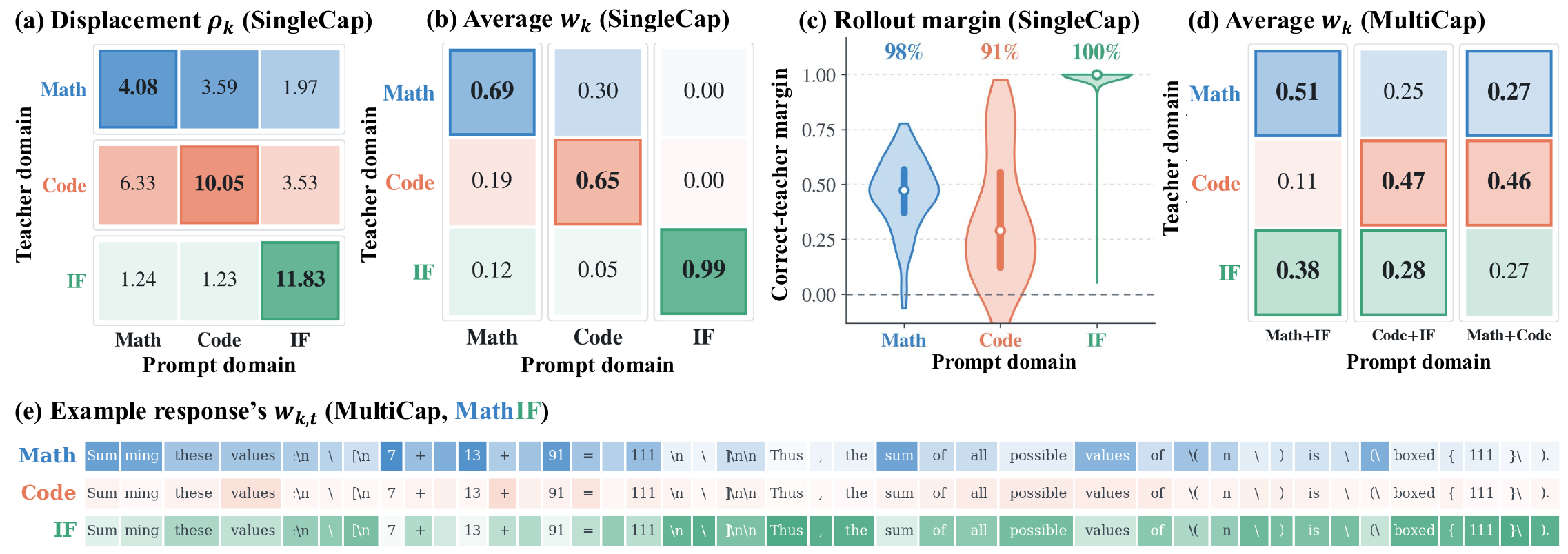}
    \caption{\textbf{State-dependent supervision allocation by TrustMOPD.}
    (a) Mean raw displacement $\rho_{k,t}$ from the shared reference. 
    Each teacher's mean displacement is largest on its matching domain, but raw magnitudes differ in scale across teachers.
    (b) Mean rollout-level teacher weights on \textsc{SingleCap}, grouped by prompt domain.
    (c) Distribution of the domain-matched teacher's margin $m_i=\bar{w}_{i,d_i}-\max_{k\neq d_i}\bar{w}_{i,k}$, where $\bar{w}_{i,k}$ is teacher $k$'s mean weight over tokens in rollout $i$, and $d_i=d(x_i)$ is the prompt's ground-truth domain index. Percentages report the fraction of rollouts with $m_i>0$.
    (d) Mean rollout-level teacher weights on \textsc{MultiCap}, grouped by capability composition; outlined cells mark relevant teachers.
    (e) Token-level teacher weights for an example mathematics-and-instruction-following response.
    Rows correspond to teachers; darker shading indicates higher weights.}
    \label{fig:allocation-analysis}
\end{figure}
We examine domain alignment in raw displacement, calibrated scores, and supervision weights, and illustrate how allocation varies within individual responses.
For rollout $i$ with response $y_i$, we compute teacher $k$'s mean weight as $\bar{w}_{i,k}=|y_i|^{-1}\sum_{t=1}^{|y_i|}w_{i,k,t}$ and group rollouts by domain or capability composition.
Labels are used only for analysis, never for allocation.

\noindent\(\bullet\)\enspace
\textbf{Obs.\ 3: Displacement and allocation exhibit domain alignment.}
We first analyze \textsc{SingleCap}, where each prompt targets a single capability and has a unique domain label.
Figure~\ref{fig:allocation-analysis}(a) shows that each teacher's mean raw displacement is highest on its own domain, although the scales differ across teachers.
Table~\ref{tab:apd_rmatrix} shows that after calibration, the domain-matched teacher has the highest mean score in each prompt domain.
Figure~\ref{fig:allocation-analysis}(b) shows corresponding alignment in mean teacher weights: the matched teachers receive $0.69$, $0.65$, and $0.99$ for mathematics, code, and instruction following, respectively.
Figure~\ref{fig:allocation-analysis}(c) further shows that the domain-matched teacher receives the largest rollout-averaged weight on $98\%$, $91\%$, and $100\%$ of rollouts, respectively.

\noindent\(\bullet\)\enspace
\textbf{Obs.\ 4: Allocation favors relevant teachers on average and varies within a response.}
Figure~\ref{fig:allocation-analysis}(d) shows that on \textsc{MultiCap}, relevant teachers jointly receive $0.89$, $0.75$, and $0.73$ of total weight for mathematics plus instruction following, code plus instruction following, and mathematics plus code, respectively, all exceeding the $2/3$ share under uniform weighting.
For mathematics-and-instruction-following prompts, the respective teacher weights are $0.51$ and $0.38$, compared with $0.11$ for the code teacher.
In the example response shown in Figure~\ref{fig:allocation-analysis}(e), the mathematics teacher receives its highest weights around the operands ``7'', ``13'', and ``91'' and their arithmetic composition.
After the result ``111'' is obtained, the instruction-following teacher dominates the concluding explanation and the formatting command ``\texttt{\textbackslash boxed\{111\}}'', while the code teacher remains weak throughout.
Additional teacher-weight plots and token-level visualizations covering all three capability compositions appear in Appendix~\ref{apd:alloc}, with further training-time allocation examples in Appendix~\ref{apd:case}.

\subsection{Ablations, Sensitivity, and Generalization}
\label{sec:exp-ablation}

\paragraph{Ablation settings.}
Figure~\ref{fig:component-ablation} compares ablations of three design choices.
First, we replace the reference-relative displacement proxy $\rho_{k,t}$ with teacher likelihood, negative teacher entropy, or teacher--student KL, calibrating each alternative before applying the same power normalization.
Second, ``Without $\mu_k$'' retains the displacement proxy but disables calibration by setting $\mu_k=1$.
Third, ``Response-level'' applies each teacher's rollout-averaged weight at every token, preserving its total weight within the rollout while removing within-response variation in allocation.
All variants share the student, teacher pool, and training protocol of full TrustMOPD.
Detailed definitions and formulas for all variants appear in Appendix~\ref{apd:ablation}.

\noindent\(\bullet\)\enspace
\textbf{Obs.\ 5: Displacement performs best among tested proxies, while calibration and token-level allocation contribute more on \textsc{MultiCap}.}
Relative to full TrustMOPD, alternative proxies reduce overall by $1.63$--$2.08$ points on \textsc{SingleCap} and $1.48$--$3.06$ on \textsc{MultiCap}.
Calibration and allocation granularity have larger effects on \textsc{MultiCap}: removing $\mu_k$ reduces overall by $1.28$ points, compared with $0.31$ on \textsc{SingleCap}; response-level allocation incurs a $0.92$-point drop, compared with $0.05$ on \textsc{SingleCap}.
These differences suggest that calibrating teacher scores and adapting weights within a response are more beneficial when training prompts combine multiple capabilities.
Table~\ref{tab:apd_ablation} provides absolute scores and per-domain results.

\begin{wrapfigure}{r}{0.4\textwidth}
    \centering
    \includegraphics[width=\linewidth]{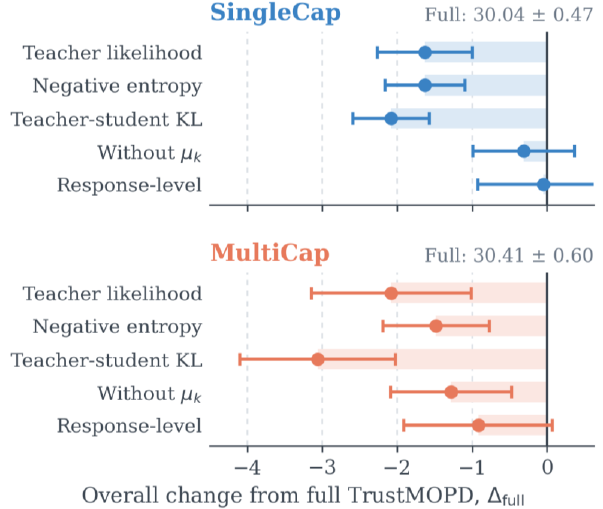}
    \vspace{-4mm}
    \caption{\textbf{Core component ablations on \textcolor{singlecapblue}{\textsc{SingleCap}} and \textcolor{multicaporange}{\textsc{MultiCap}}.} Each row changes one component of TrustMOPD while holding other settings fixed. 
    Points show mean overall-score differences between each variant and full TrustMOPD; the vertical line at zero marks the reference. Error bars denote standard deviations of paired score differences across five seeds. Further right is better.}
    \vspace{-6mm}
    \label{fig:component-ablation}
\end{wrapfigure}

\paragraph{Sensitivity settings.}
We vary the sharpness exponent $\gamma\in\{3,5,7,9\}$ with other settings fixed and include \textsc{Uniform} as a reference.
We report overall gains over the initial student.
Unless explicitly varied, all sensitivity and transfer experiments use $\gamma=7$ and the default training hyperparameters.
For both analyses, we evaluate the final checkpoint from each of five training seeds.

\noindent\(\bullet\)\enspace
\textbf{Obs.\ 6: Performance remains stable on \textsc{SingleCap} but peaks at $\gamma=7$ on \textsc{MultiCap}.}
Figure~\ref{fig:robustness-generalization}(a) shows that TrustMOPD outperforms matched \textsc{Uniform} controls at every tested $\gamma$.
On \textsc{SingleCap}, gains remain between $5.11$ and $5.40$ points as the mean maximum teacher weight increases from $0.66$ to $0.86$ in panel~(b).
On \textsc{MultiCap}, gains rise from $4.42$ at $\gamma=3$ to $5.52$ at $\gamma=7$, then decline to $4.87$ at $\gamma=9$ despite further concentration.
These results suggest that sharpening allocation benefits mixed-capability training up to a point, but further concentration need not improve performance.
Table~\ref{tab:apd_gamma} provides absolute scores.

\paragraph{Transfer settings.}
For backbone transfer, we replace the default SmolLM3-3B setup with DeepSeek-R1-Distill-Qwen-7B~\citep{deepseek2025r1} as the student initialization and shared reference, together with three public RL-finetuned teachers derived from this checkpoint.
For data-source transfer, we rebuild both training sets using DeepMath-103K~\citep{he2025deepmath}, KodCode-V1~\citep{xu2025kodcode}, and the T\"ulu 3 persona instruction-following dataset~\citep{lambert2024tulu3}, matching the original dataset sizes and capability compositions while retaining the default student and teachers.
Evaluation data are excluded from both training sets.
We report overall gains over \textsc{Uniform} under the same model and data settings.
Model checkpoints and data construction details appear in Appendices~\ref{apd:exp-models} and~\ref{apd:exp-data}, respectively.

\noindent\(\bullet\)\enspace
\textbf{Obs.\ 7: TrustMOPD remains effective across model backbones and training-data sources.}
Figure~\ref{fig:robustness-generalization}(c) shows that on DeepSeek-R1-Distill-Qwen-7B, TrustMOPD exceeds matched \textsc{Uniform} controls in overall score by $1.40$ points on \textsc{SingleCap} and $1.15$ on \textsc{MultiCap}.
Figure~\ref{fig:robustness-generalization}(d) shows that TrustMOPD remains effective with independently sourced prompts, outperforming matched \textsc{Uniform} controls by $2.22$ points on \textsc{SingleCap} and $1.95$ on \textsc{MultiCap}.
Its overall scores are slightly lower than those obtained with the corresponding default training sets, by $0.60$ and $0.79$ points, respectively.
These results support the transferability of TrustMOPD's advantage over uniform weighting.
Table~\ref{tab:apd_transfer} provides absolute scores.

\begin{figure}[t]
\centering
\includegraphics[width=\linewidth]{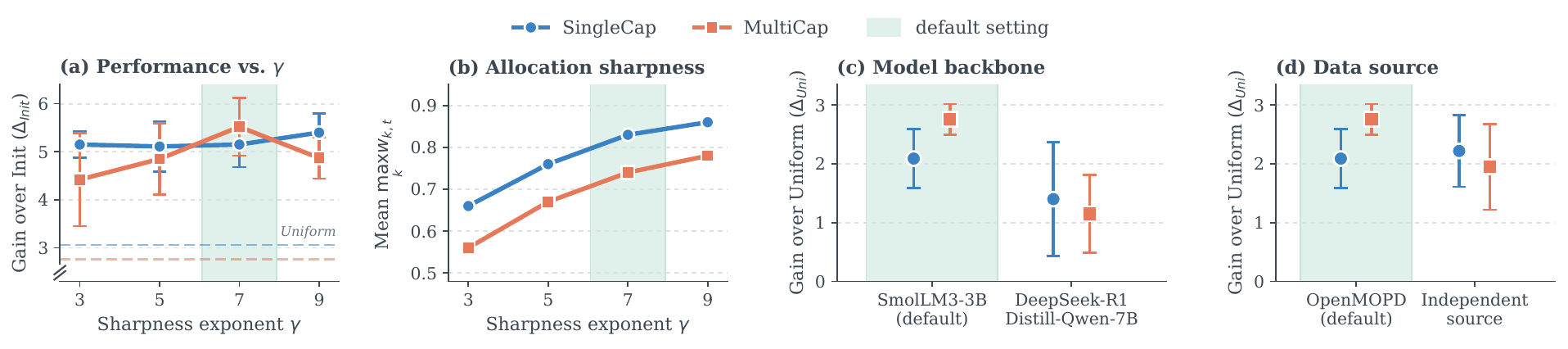}
\caption{\textbf{Sensitivity and transfer of TrustMOPD.}
(a) Overall gain over the initial student, $\Delta_{\mathrm{Init}}$, as the sharpness exponent $\gamma$ varies. Dashed lines indicate the corresponding \textsc{Uniform} baselines.
(b) Allocation sharpness, measured by $\mathbb{E}_{s_t}[\max_k w_{k,t}]$, as $\gamma$ varies.
(c) Transfer to a different backbone and teacher pool.
(d) Transfer across training-data sources with the models fixed.
In (c,d), $\Delta_{\mathrm{Uni}}$ is the overall gain over the matched \textsc{Uniform} baseline, computed by subtracting scores within each seed.
Results in (a,c,d) are means over five seeds; error bars denote standard deviations of these gains.
The $y$-axis in (a) is truncated.}
\label{fig:robustness-generalization}
\end{figure}

\section{Related Work}
\label{sec:related_work}

\paragraph{On-policy distillation and adaptive supervision.}
OPD queries teachers at student-generated prefixes to provide supervision on the states encountered during student training~\citep{agarwal2024onpolicy_opsd,gu2024minillm_opd,lu2025onpolicydistillation}; \citet{zan2026echoesoracle} review the broader literature.
Studies of teacher--student compatibility and OPD failures examine when such supervision is useful~\citep{li2026rethinkingopd,fu2026revisitingopd,wang2026demystifyingopd}.
This motivates evaluating supervision locally instead of assuming that a teacher is equally useful throughout a response.
Adaptive methods address this issue at different granularities.
At the prompt or trajectory level, they select, order, or gate examples using correctness, student alignment, verifier feedback, and reliability estimates~\citep{zhang2026brts,akhondzadeh2026rgopd,zhang2026tgopd,zhu2026reorder}.
At finer granularity, methods select informative or teachable tokens and reasoning prefixes~\citep{huang2025selectkd,xu2026tip_opd,wang2026teachability,zhang2026fastopd}, adjust updates according to token position or predicted entropy effects~\citep{pwopsd2026,yang2026influencedirected}, or combine trajectory filtering with token reweighting~\citep{li2026fireopd}.
These approaches establish supervision selection and weighting as important dimensions of OPD.
TrustMOPD studies the related allocation problem that arises when several complementary teachers are available at the same student-generated state.

\paragraph{Multi-teacher distillation and supervision allocation.}
Multi-teacher KD combines teacher outputs or representations~\citep{you2017multiteacher,fukuda2017ensemble}, with adaptive variants learning instance-level weights or sampling teachers during training~\citep{yuan2021rlkd,ding2024dynakd}.
Under offline supervision, EWAD uses teacher entropy and inter-teacher agreement to balance teacher and gold supervision~\citep{sumit2026reliabilitygated}.
In OPD, MOPD assigns a domain-matched teacher to each response, while Open-MOPD balances optimization budgets and refreshes rewards under oracle routing~\citep{ma2026mopd,openmopd2026}.
Recent methods introduce further adaptation through domain sampling, answer-based teacher verification, or post-debate confidence weighting~\citep{sun2026d3mopd,he2026mtsdpo,wang2026madopd}.
Uncertainty-calibrated MOPD combines prompt-level routing with trajectory filtering and entropy-calibrated token-update gating~\citep{liu2026uncertaintymopd}.
Closer to token-level teacher allocation, H-OPD weights vision-language and text-only teachers using prediction entropy, while VG-OPD localizes verified expert gains through teacher--student disagreement~\citep{yin2026hopd,xu2026vgopd}.
These approaches differ in allocation granularity and the evidence used to assess teacher usefulness.
However, domain-level expertise and confidence measures need not identify the most useful teacher at every intermediate generation state.
TrustMOPD derives token-level teacher weights from calibrated displacement relative to a shared reference.
This provides a common basis for comparing specialist teachers at each prefix, without domain labels, outcome verifiers, learned routers, or teacher debate.
\section{Conclusion}
\label{sec:conclusion}

We introduce TrustMOPD, a label-free method that allocates token-level supervision using calibrated teacher displacement from a shared reference.
Across both training corpora, it achieves the highest overall score among the compared label-free distillation methods and approaches label-based MOPD on \textsc{SingleCap}.
Analyses show domain alignment in displacement and supervision weights and illustrate within-response changes in allocation.
Transfer experiments further show gains over matched uniform-weighting baselines on an additional backbone and independently sourced training data.
These results support calibrated reference-relative displacement as a useful signal for integrating complementary teacher capabilities.
The current formulation assumes a shared reference.
Future work includes extending TrustMOPD to teachers without a common initialization and evaluating it on larger models and a broader range of datasets and domains.

\bibliography{references}
\bibliographystyle{references}

\appendix

\makeatletter
\@ifundefined{lemma}{\newtheorem{lemma}{Lemma}}{}
\@ifundefined{proposition}{\newtheorem{proposition}{Proposition}}{}
\@ifundefined{assumption}{\newtheorem{assumption}{Assumption}}{}
\@ifundefined{proof}{%
  \newenvironment{proof}{\par\noindent\textit{Proof.}\ }%
  {\hfill\textit{Q.E.D.}\par}%
}{}
\makeatother

\section{Theoretical Analysis and Structural Properties}
\label{apd:theory}

This appendix develops the utility interpretation and structural properties of TrustMOPD.
\S\ref{apd:theory-setting} defines the sequential RL setting, and \S\ref{apd:displacement} proves the utility-contrast and calibration results.
\S\ref{apd:proofs} explains centering and the frozen calibration divisor, while \S\ref{apd:proof-mixture} derives the mixed log-target implementation.
\S\ref{apd:disp-scope} connects these results to practical teacher distributions.

\subsection{Sequential RL setting and optimal-teacher assumption}
\label{apd:theory-setting}

We model autoregressive generation as a finite-horizon Markov decision process.
At step $t$, the state $s_t=(x,y_{<t})$ consists of a prompt $x$ and the generated prefix $y_{<t}$, and the action $v_t\in\mathcal{V}$ is the next token.
The transition appends the selected token to the prefix until termination.
For teacher $k$, let $r_k(s_t,v_t)$ denote the reward contribution associated with its specialization.
This formulation includes terminal-only rewards as a special case.

Let $\pi_{\mathrm{ref}}$ be a fixed reference policy with full support over the token vocabulary at every nonterminal state, and let $\beta_k>0$ be a teacher-specific regularization coefficient.
We consider the KL-regularized objective
\begin{equation}
J_k(\pi)
=
\mathbb{E}_{\tau\sim\pi}
\left[
\sum_{t=1}^{|y|}r_k(s_t,v_t)
-
\beta_k\sum_{t=1}^{|y|}
\log\frac{\pi(v_t\mid s_t)}{\pi_{\mathrm{ref}}(v_t\mid s_t)}
\right],
\label{eq:apd-sequential-objective}
\end{equation}
where $\tau$ denotes a generation trajectory and $|y|$ is its response length.
The regularization term is equivalently $\beta_k$ times the expected sum of
$D_{\mathrm{KL}}\!\left(\pi(\cdot\mid s_t)\,\|\,\pi_{\mathrm{ref}}(\cdot\mid s_t)\right)$
over states visited by $\pi$.

The utility interpretation in \S\ref{apd:displacement} uses the following assumption.
\begin{assumption}[Optimal teacher]
\label{asm:converged}
For every teacher $k$, $\pi_k$ coincides with the Bellman-optimal policy $\pi_k^\star$ for the objective in equation~\ref{eq:apd-sequential-objective} at every state considered in the analysis, including calibration states and states visited by the student.
\end{assumption}

Let $V_k^\star(s)$ denote the optimal KL-regularized return from state $s$, with $V_k^\star(s)=0$ at terminal states.
Define the optimal continuation utility as
\begin{equation}
Q_k^\star(s,v)
=
r_k(s,v)
+
\mathbb{E}_{s'\mid s,v}\bigl[V_k^\star(s')\bigr].
\label{eq:apd-soft-q}
\end{equation}
Thus, $Q_k^\star(s,v)$ combines the immediate reward with the optimal regularized return after taking action $v$.

\subsection{Proofs of Theorem~\ref{thm:utility-contrast} and Corollary~\ref{cor:relative-utility}}
\label{apd:displacement}

We first derive the closed-form optimal policy for the sequential objective in \eqref{eq:apd-sequential-objective}.
Under Assumption~\ref{asm:converged}, taking its log-ratio relative to the reference and centering over the vocabulary yields Theorem~\ref{thm:utility-contrast}.
Normalizing the resulting displacement magnitude by its calibration mean then gives Corollary~\ref{cor:relative-utility}.

\paragraph{Closed-form optimal policy.}
At every nonterminal state $s$, Bellman optimality gives
\begin{equation}
V_k^\star(s)
=
\max_{q\in\Delta(\mathcal{V})}
\left\{
\sum_{v\in\mathcal{V}}q(v)Q_k^\star(s,v)
-
\beta_k\,\mathrm{KL}
\bigl(q\,\|\,\pi_{\mathrm{ref}}(\cdot\mid s)\bigr)
\right\},
\label{eq:apd-bellman-optimality}
\end{equation}
where $\Delta(\mathcal{V})$ is the probability simplex over the token vocabulary.
The continuation utility $Q_k^\star$ incorporates the immediate reward and all subsequent rewards and KL penalties through $V_k^\star$.
Thus, the state-wise optimization in \eqref{eq:apd-bellman-optimality} accounts for the full remaining generation trajectory.

\begin{lemma}[Reference-regularized optimal policy]
\label{lem:apd-bellman-policy}
The maximizer of \eqref{eq:apd-bellman-optimality} is unique and satisfies
\begin{equation}
\pi_k^\star(v\mid s)
=
\pi_{\mathrm{ref}}(v\mid s)
\exp\left(
\frac{Q_k^\star(s,v)-V_k^\star(s)}{\beta_k}
\right),
\label{eq:apd-bellman-policy}
\end{equation}
where
\begin{equation}
V_k^\star(s)
=
\beta_k\log
\sum_{v'\in\mathcal{V}}
\pi_{\mathrm{ref}}(v'\mid s)
\exp\left(\frac{Q_k^\star(s,v')}{\beta_k}\right).
\label{eq:apd-soft-value}
\end{equation}
\end{lemma}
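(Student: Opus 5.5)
The plan is to treat the state-wise problem in \eqref{eq:apd-bellman-optimality} as maximizing a strictly concave function over the simplex, with $Q_k^\star(s,\cdot)$ held fixed. Fixing $Q$ is legitimate because $Q_k^\star(s,v)$ depends only on the reward and on $V_k^\star$ at successor states, so it does not depend on the choice of $q$ at $s$. Write $p=\pi_{\mathrm{ref}}(\cdot\mid s)$, which has full support by assumption, and $F(q)=\sum_v q(v)Q(v)-\beta_k\sum_v q(v)\log\frac{q(v)}{p(v)}$. Since $Q$ is finite (finite horizon, finite rewards) and $p(v)>0$, $F$ is continuous on the compact simplex, so a maximizer exists. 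Because $q\mapsto\sum_v q\log q$ is strictly convex and $\beta_k>0$, $F$ is strictly concave, which gives uniqueness.

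The cleanest way to identify the maximizer avoids Lagrange multipliers and boundary cases. Set $Z=\sum_{v'}p(v')\exp(Q(v')/\beta_k)$, which is finite and positive, and define the candidate $q^\star(v)=p(v)\exp(Q(v)/\beta_k)/Z$. This is a valid distribution with full support. A direct computation gives
\[
F(q)=\beta_k\log Z-\beta_k\,\mathrm{KL}\bigl(q\,\|\,q^\star\bigr),
\]
since $\log\frac{q}{p}=\log\frac{q}{q^\star}+\frac{Q}{\beta_k}-\log Z$. Gibbs' inequality then shows that $F(q)\le\beta_k\log Z$, with equality if and only if $q=q^\star$. This single identity delivers existence, uniqueness, the optimal value $V_k^\star(s)=\beta_k\log Z$ (which is \eqref{eq:apd-soft-value}), and the policy $q^\star(v)=p(v)\exp\bigl((Q(v)-V_k^\star(s))/\beta_k\bigr)$ (which is \eqref{eq:apd-bellman-policy}). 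The convention $0\log 0=0$ handles any $q$ on the boundary of the simplex.

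The only real subtlety is making sure the Bellman recursion is well-founded, so that $Q_k^\star$ and $V_k^\star$ are finite, well-defined quantities before the state-wise argument is applied. I would handle this by backward induction over the finite horizon. At terminal states $V_k^\star=0$. At each earlier step, $Q_k^\star$ is finite by the induction hypothesis, and the argument above then produces a finite $V_k^\star$. Applying the identity at each stage and each state completes the proof.
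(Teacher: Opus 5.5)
Your proof is correct and takes the same approach as the paper. The paper also defines the normalized distribution $\widehat q\propto q_{\mathrm{ref}}\exp(Q/\beta)$ and rewrites the objective as $\beta c_s-\beta\,\mathrm{KL}(q\,\|\,\widehat q)$ with $c_s=\log Z$; it then reads off uniqueness, the optimal value, and the policy from nonnegativity of KL. Your compactness and strict-concavity argument is redundant given that identity, and your backward-induction check that $Q_k^\star$ is finite is a harmless detail the paper leaves implicit.
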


\begin{proof}
Fix $s$ and abbreviate
$Q(v)=Q_k^\star(s,v)$,
$q_{\mathrm{ref}}(v)=\pi_{\mathrm{ref}}(v\mid s)$,
and $\beta=\beta_k$.
Define the normalized distribution
$\widehat q(v)=q_{\mathrm{ref}}(v)\exp(Q(v)/\beta-c_s)$,
where
\begin{equation}
c_s
=
\log\sum_{v'}q_{\mathrm{ref}}(v')
\exp\left(\frac{Q(v')}{\beta}\right),
\qquad
\log\frac{\widehat q(v)}{q_{\mathrm{ref}}(v)}
=
\frac{Q(v)}{\beta}-c_s.
\label{eq:apd-stationarity}
\end{equation}
Because $q_{\mathrm{ref}}$ has full support, so does $\widehat q$.
For any $q\in\Delta(\mathcal{V})$, substitution gives
\[
\sum_v q(v)Q(v)
-
\beta\,\mathrm{KL}(q\,\|\,q_{\mathrm{ref}})
=
\beta c_s
-
\beta\,\mathrm{KL}(q\,\|\,\widehat q).
\]
Since $\beta>0$ and KL divergence is nonnegative, the objective is uniquely maximized at $q=\widehat q$, with optimal value $\beta c_s$.
Consequently, $V_k^\star(s)=\beta c_s$, which gives \eqref{eq:apd-soft-value}.
Substituting this equality into the definition of $\widehat q$ gives \eqref{eq:apd-bellman-policy}.
\end{proof}

\paragraph{Proof of Theorem~\ref{thm:utility-contrast}.}
Fix a student-generated state $s_t$.
Under Assumption~\ref{asm:converged}, write
$q_{k,t}=\pi_k(\cdot\mid s_t)=\pi_k^\star(\cdot\mid s_t)$
and
$q_{\mathrm{ref},t}=\pi_{\mathrm{ref}}(\cdot\mid s_t)$.
Taking logarithms in \eqref{eq:apd-bellman-policy} yields
\begin{equation}
\log\frac{q_{k,t}(v)}{q_{\mathrm{ref},t}(v)}
=
\frac{Q_k^\star(s_t,v)-V_k^\star(s_t)}{\beta_k}.
\label{eq:apd-converged-logratio}
\end{equation}
Define the uniform vocabulary mean
$\bar Q_{k,t}^\star
=
|\mathcal{V}|^{-1}\sum_{v'}Q_k^\star(s_t,v')$.
Averaging \eqref{eq:apd-converged-logratio} over the vocabulary gives
$(\bar Q_{k,t}^\star-V_k^\star(s_t))/\beta_k$.
Subtracting this mean, as prescribed by \eqref{eq:method-displacement}, cancels the state-value term and gives
\begin{equation}
d_{k,t}(v)
=
\frac{Q_k^\star(s_t,v)-\bar Q_{k,t}^\star}{\beta_k},
\qquad
\rho_{k,t}
=
\frac{\left\|Q_k^\star(s_t,\cdot)-\bar Q_{k,t}^\star\mathbf{1}\right\|_2}{\beta_k}.
\label{eq:apd-utility-contrast}
\end{equation}
This establishes \eqref{eq:rl-utility-contrast} and proves Theorem~\ref{thm:utility-contrast}.
The same derivation applies to a selected candidate set $\mathcal{A}_t$ by taking the centering mean and displacement norm over $\mathcal{A}_t$.
Renormalizing the teacher and reference distributions on this set contributes only token-independent offsets, which centering removes.
For this version, the calibration argument below uses the corresponding candidate-restricted utility contrasts and the same candidate-selection rule used to compute displacement.

Centering removes the common utility offset while preserving scaled differences between actions.
\S\ref{apd:proofs} establishes the corresponding distribution-level invariance without requiring teacher optimality.

\paragraph{Proof of Corollary~\ref{cor:relative-utility}.}
Let
\begin{equation}
\sigma_{k,t}
:=
\left\|Q_k^\star(s_t,\cdot)-\bar Q_{k,t}^\star\mathbf{1}\right\|_2,
\qquad
\bar\sigma_k
:=
\frac{
\mathbb{E}_{\mathrm{cal}}
\left[\sum_{t=1}^{|y|}\sigma_{k,t}\right]
}{
\mathbb{E}_{\mathrm{cal}}[|y|]
}.
\label{eq:apd-sigma-calibration}
\end{equation}
Here, $\bar\sigma_k$ is the token-weighted mean utility-contrast magnitude under the calibration distribution.
Assume $\bar\sigma_k>0$ for every teacher.
Because $\beta_k$ is constant across states for teacher $k$, substituting \eqref{eq:apd-utility-contrast} into \eqref{eq:method-scale} gives
\begin{equation}
\mu_k
=
\frac{\bar\sigma_k}{\beta_k},
\qquad
r_{k,t}
=
\frac{\rho_{k,t}}{\mu_k}
=
\frac{\sigma_{k,t}}{\bar\sigma_k}.
\label{eq:apd-relative-utility}
\end{equation}
At any state where $\sum_j\sigma_{j,t}>0$, at least one calibrated score is positive, so the normalization denominator in \eqref{eq:method-weight} is nonzero.
Since $z\mapsto z^\gamma$ is strictly increasing on $[0,\infty)$ for $\gamma>0$, exponentiation and normalization preserve the ranking of the calibrated scores:
\begin{equation}
\arg\max_k w^{\mathrm{Trust}}_{k,t}
=
\arg\max_k r_{k,t}
=
\arg\max_k\frac{\sigma_{k,t}}{\bar\sigma_k}.
\label{eq:apd-relative-ranking}
\end{equation}
Equations~\eqref{eq:apd-relative-utility}--\eqref{eq:apd-relative-ranking} establish \eqref{eq:relative-utility-allocation} and prove Corollary~\ref{cor:relative-utility}.

Calibration therefore cancels the explicit $1/\beta_k$ factor and expresses each teacher's current utility-contrast magnitude relative to its own calibration mean.
TrustMOPD assigns the greatest supervision weight to the teacher with the largest relative utility contrast at the current student-generated state.

\subsection{Centering and the frozen calibration divisor}
\label{apd:proofs}

This subsection explains two design choices in TrustMOPD: centering the teacher-reference log-ratio and normalizing its magnitude by a frozen calibration divisor.
The resulting properties follow from the construction and do not require Assumption~\ref{asm:converged}.

\paragraph{Centering on a candidate set.}
Fix a state and a candidate set $\mathcal{A}$ with $|\mathcal{A}|=n\ge2$.
Let $q_k$ and $q_{\mathrm{ref}}$ be the teacher and reference distributions, both positive on $\mathcal{A}$.
Define the restricted log-ratio vector and its centered version as
\[
g_k(v)=\log\frac{q_k(v)}{q_{\mathrm{ref}}(v)},
\qquad
\bar g_k=\frac{1}{n}\sum_{v\in\mathcal{A}}g_k(v),
\qquad
d_k=g_k-\bar g_k\mathbf{1}.
\]
For a vector $g$ on $\mathcal{A}$, the corresponding reference-weighted distribution is
\[
p_g(v)
=
\frac{q_{\mathrm{ref}}(v)\exp(g(v))}
{\sum_{u\in\mathcal{A}}q_{\mathrm{ref}}(u)\exp(g(u))}.
\]
In particular, $p_{g_k}$ equals the teacher distribution renormalized on $\mathcal{A}$.

\begin{proposition}[Canonical centered representative]
\label{prop:gauge}
Adding a constant to every coordinate of $g_k$ leaves $p_{g_k}$ unchanged.
Among all vectors $g_k+\lambda\mathbf{1}$ with $\lambda\in\mathbb{R}$, $d_k$ is the unique minimum-norm representative and the orthogonal projection of $g_k$ onto $\mathbf{1}^{\perp}$.
Moreover,
\begin{equation}
\lVert g_k\rVert_2^2
=
\lVert d_k\rVert_2^2+n\bar g_k^2.
\label{eq:apd-split}
\end{equation}
\end{proposition}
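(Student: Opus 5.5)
The plan has three short parts: gauge invariance, the minimum-norm characterization, and the Pythagorean split. All three follow from elementary linear algebra on $\mathbb{R}^{\mathcal{A}}$ with the standard inner product, using only the definitions of $g_k$, $\bar g_k$, $d_k$ and $p_g$. No earlier result is needed, and Assumption~\ref{asm:converged} is not used.

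\textbf{Invariance.} Substitute $g_k+\lambda\mathbf{1}$ into the definition of $p_g$. Each term in both numerator and denominator picks up the same factor $e^{\lambda}$, which cancels, so $p_{g_k+\lambda\mathbf{1}}=p_{g_k}$. I will also note that $p_{g_k}$ is the teacher renormalized on $\mathcal{A}$. Indeed, $q_{\mathrm{ref}}(v)e^{g_k(v)}=q_k(v)$, so the ratio equals $q_k(v)/\sum_{u\in\mathcal{A}}q_k(u)$. Positivity on $\mathcal{A}$ keeps every denominator nonzero.

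\textbf{Projection and minimum norm.} First I verify that $d_k\in\mathbf{1}^\perp$. This holds because $\langle d_k,\mathbf{1}\rangle=\sum_v g_k(v)-n\bar g_k=0$. Since $g_k-d_k=\bar g_k\mathbf{1}\in\operatorname{span}(\mathbf{1})$, the decomposition $g_k=d_k+\bar g_k\mathbf{1}$ is the orthogonal decomposition of $g_k$ along $\operatorname{span}(\mathbf{1})\oplus\mathbf{1}^\perp$. Hence $d_k$ is the orthogonal projection of $g_k$ onto $\mathbf{1}^\perp$.

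For minimality, I take any $\lambda$ and write $g_k+\lambda\mathbf{1}=d_k+(\bar g_k+\lambda)\mathbf{1}$. Orthogonality then gives
\[
\lVert g_k+\lambda\mathbf{1}\rVert_2^2=\lVert d_k\rVert_2^2+n(\bar g_k+\lambda)^2 .
\]
This is a strictly convex quadratic in $\lambda$ with a unique minimizer at $\lambda=-\bar g_k$. That minimizer gives the representative $d_k$, which proves uniqueness. Setting $\lambda=0$ in the same identity yields \eqref{eq:apd-split} directly.

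\textbf{Main obstacle.} There is no genuine technical obstacle. The only point requiring care is bookkeeping: the centering mean must be taken uniformly over $\mathcal{A}$, not reference-weighted, so that the decomposition is orthogonal in the unweighted Euclidean inner product. It is also worth stating explicitly that $n\ge2$ is not needed for the algebra, only to make $\mathbf{1}^\perp$ nontrivial.
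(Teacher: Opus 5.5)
Your proof is correct and follows the paper's argument: the same $e^{\lambda}$ cancellation for invariance, the same orthogonal decomposition $g_k=d_k+\bar g_k\mathbf{1}$ for the projection property and the split, and minimization of the same quadratic in $\lambda$. The only cosmetic difference is the form of that quadratic. You write it in completed-square form via Pythagoras, $\lVert d_k\rVert_2^2+n(\bar g_k+\lambda)^2$, while the paper expands $\lVert g_k\rVert_2^2+2\lambda\langle g_k,\mathbf{1}\rangle+n\lambda^2$ directly.
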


\begin{proof}
A common shift by $\lambda$ multiplies the numerator and denominator of $p_{g_k}$ by the same factor $\exp(\lambda)$, leaving the distribution unchanged.
The squared norm of the shifted vector is
\begin{equation}
\lVert g_k+\lambda\mathbf{1}\rVert_2^2
=
\lVert g_k\rVert_2^2
+2\lambda\langle g_k,\mathbf{1}\rangle
+n\lambda^2.
\label{eq:apd-shift-norm}
\end{equation}
This strictly convex quadratic is uniquely minimized at $\lambda=-\bar g_k$, yielding $d_k$.
Since $\langle d_k,\mathbf{1}\rangle=0$, the decomposition
$g_k=d_k+\bar g_k\mathbf{1}$
is orthogonal, establishing the projection property and \eqref{eq:apd-split}.
\end{proof}

Centering also preserves every pairwise change in log odds:
\begin{equation}
d_k(v)-d_k(v')
=
\log\frac{q_k(v)}{q_k(v')}
-
\log\frac{q_{\mathrm{ref}}(v)}{q_{\mathrm{ref}}(v')},
\qquad v,v'\in\mathcal{A}.
\label{eq:apd-logodds-contrast}
\end{equation}
Thus, the centered vector retains how the teacher changes the relative preference between candidate tokens, while its norm excludes the common-offset component in \eqref{eq:apd-split}.
Renormalizing either distribution on $\mathcal{A}$ changes its log-probabilities only by a constant and therefore leaves $d_k$ unchanged.
These identities apply exactly on the selected candidate set, over which the displacement norm is computed.

\paragraph{A fixed baseline across training batches.}
TrustMOPD uses a frozen calibration divisor $\mu_k>0$ while recomputing the displacement magnitude $\rho_{k,t}$ on the current student's states and candidate sets.

To see the distinction from batch-local normalization, fix teacher $k$ and a batch $B$ containing $m\ge1$ token positions.
Let
$\hat\mu_k(B)=m^{-1}\sum_{t\in B}\rho_{k,t}$
be the current batch mean.
When $\hat\mu_k(B)>0$, normalizing by this mean gives
\begin{equation}
\frac{1}{m}\sum_{t\in B}
\frac{\rho_{k,t}}{\hat\mu_k(B)}
=1.
\label{eq:apd-batchlocal-mean}
\end{equation}
Batch-local normalization therefore fixes each teacher's mean normalized score at $1$, regardless of changes in its mean displacement across batches.
With the frozen divisor, the corresponding mean is instead
\begin{equation}
\frac{1}{m}\sum_{t\in B}r_{k,t}
=
\frac{1}{m}\sum_{t\in B}\frac{\rho_{k,t}}{\mu_k}
=
\frac{\hat\mu_k(B)}{\mu_k}.
\label{eq:apd-frozen-mean}
\end{equation}
A value above or below $1$ indicates that the current batch's mean displacement is respectively above or below the teacher's initial calibration mean.
The frozen divisor thus preserves changes in mean displacement relative to a fixed baseline, while the numerator adapts to the states visited by the evolving student.

\subsection{Exact aggregation into a mixed log-target}
\label{apd:proof-mixture}

The weighted distillation objective in \eqref{eq:weighted-mopd} can be evaluated using a single mixed log-target.
This follows from the affine dependence of reverse KL on teacher log-probabilities and does not require Assumption~\ref{asm:converged}.
We work at a fixed student-generated state and suppress the token index.
All distributions below are normalized on the same support $\mathcal{A}$, which is the selected candidate set when candidate-based distillation is used.
Teacher probabilities are positive on this support, and $\ell_k=\log q_k$ denotes teacher $k$'s log-probability vector.

\paragraph{Aggregation of affine losses.}
\begin{proposition}[Exact aggregation for affine token losses]
\label{prop:mixture}
Suppose the token loss has the form
\begin{equation}
\ell_{\mathrm{distill}}(q_k,p_\theta)
=
\langle a(p_\theta),\ell_k\rangle+b(p_\theta),
\label{eq:apd-affine-loss}
\end{equation}
where $a(p_\theta)$ and $b(p_\theta)$ do not depend on teacher $k$.
For weights $w_k\ge0$ satisfying $\sum_{k=1}^{K}w_k=1$, define
$\ell_{\mathrm{mix}}=\sum_{k=1}^{K}w_k\ell_k$.
Then
\begin{equation}
\sum_{k=1}^{K}w_k\ell_{\mathrm{distill}}(q_k,p_\theta)
=
\langle a(p_\theta),\ell_{\mathrm{mix}}\rangle+b(p_\theta).
\label{eq:apd-affine-aggregation}
\end{equation}
\end{proposition}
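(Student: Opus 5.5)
The claim is a direct consequence of linearity, so the plan is to expand the weighted sum and use the normalization $\sum_k w_k=1$.

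\textbf{Step 1: expand the weighted sum.} Substituting the affine form \eqref{eq:apd-affine-loss} into the left-hand side of \eqref{eq:apd-affine-aggregation} gives
\[
\sum_{k=1}^{K} w_k\bigl(\langle a(p_\theta),\ell_k\rangle+b(p_\theta)\bigr).
\]
I then split this into two sums.

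\textbf{Step 2: handle the inner-product term.} Because $a(p_\theta)$ does not depend on $k$, bilinearity of the inner product lets me pull the weights inside:
\[
\sum_k w_k\langle a(p_\theta),\ell_k\rangle=\Bigl\langle a(p_\theta),\sum_k w_k\ell_k\Bigr\rangle=\langle a(p_\theta),\ell_{\mathrm{mix}}\rangle .
\]

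\textbf{Step 3: handle the constant term.} Because $b(p_\theta)$ is also teacher-independent, $\sum_k w_k b(p_\theta)=b(p_\theta)\sum_k w_k=b(p_\theta)$. This is the only place the constraint $\sum_k w_k=1$ is used.

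Adding the two pieces yields the right-hand side of \eqref{eq:apd-affine-aggregation}. All vectors live on the common support $\mathcal{A}$, and positivity of each $q_k$ there makes every $\ell_k$ finite, so every sum is finite.

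\textbf{Connecting to reverse KL.} To justify the proposition's use after \eqref{eq:weighted-mopd}, I would note that reverse KL has exactly this affine form:
\[
D_{\mathrm{KL}}(p_\theta\,\|\,q_k)=\langle p_\theta,\log p_\theta\rangle-\langle p_\theta,\ell_k\rangle,
\]
so $a(p_\theta)=-p_\theta$ and $b(p_\theta)=\langle p_\theta,\log p_\theta\rangle$. Since the weights are treated as constants under stop-gradient, the identity also holds for gradients.

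\textbf{Main obstacle.} There is no substantive difficulty. The only thing to check is that $\ell_{\mathrm{mix}}$ is generally not a normalized log-probability vector. The identity holds at the level of log-targets, not at the level of a mixed distribution, so no normalization of $\ell_{\mathrm{mix}}$ should be claimed.
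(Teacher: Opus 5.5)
Your proposal is correct and follows the paper's proof: expand the weighted sum, use linearity of the inner product to pull out the teacher-independent $a(p_\theta)$, and use $\sum_k w_k=1$ to collapse the $b(p_\theta)$ term. Your reverse-KL identification $a(p_\theta)=-p_\theta$, $b(p_\theta)=\langle p_\theta,\log p_\theta\rangle$ and your warning that $\ell_{\mathrm{mix}}$ is unnormalized also match how the paper uses the result afterward, where normalizing $\ell_{\mathrm{mix}}$ produces the student-independent offset $-\log Z_{\mathrm{mix}}$.
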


\begin{proof}
Expanding the weighted sum gives
\[
\sum_{k=1}^{K}w_k
\bigl(\langle a(p_\theta),\ell_k\rangle+b(p_\theta)\bigr)
=
\left\langle a(p_\theta),\sum_{k=1}^{K}w_k\ell_k\right\rangle
+
b(p_\theta)\sum_{k=1}^{K}w_k.
\]
Substituting the definition of $\ell_{\mathrm{mix}}$ and
$\sum_{k=1}^{K}w_k=1$ yields \eqref{eq:apd-affine-aggregation}.
\end{proof}

\paragraph{Reverse KL and the geometric-mean target.}
Reverse KL satisfies the proposition because
\begin{equation}
D_{\mathrm{KL}}(p_\theta\Vert q_k)
=
\langle p_\theta,\log p_\theta\rangle
-
\langle p_\theta,\ell_k\rangle.
\label{eq:apd-reverse-kl-affine}
\end{equation}
Consequently, its weighted sum is exactly
\[
\sum_{k=1}^{K}w_kD_{\mathrm{KL}}(p_\theta\Vert q_k)
=
\langle p_\theta,\log p_\theta\rangle
-
\langle p_\theta,\ell_{\mathrm{mix}}\rangle.
\]
The mixed vector $\ell_{\mathrm{mix}}$ is the logarithm of the unnormalized weighted geometric mean
$\prod_{k=1}^{K}q_k(v)^{w_k}$.
Its normalized distribution is
\begin{equation}
\widetilde q_{\mathrm{mix}}(v)
=
\frac{\exp(\ell_{\mathrm{mix}}(v))}{Z_{\mathrm{mix}}},
\qquad
Z_{\mathrm{mix}}
=
\sum_{v\in\mathcal{A}}\exp(\ell_{\mathrm{mix}}(v)).
\label{eq:apd-normalized-mixture}
\end{equation}
Since
$\log\widetilde q_{\mathrm{mix}}(v)
=
\ell_{\mathrm{mix}}(v)-\log Z_{\mathrm{mix}}$,
we obtain
\begin{equation}
\sum_{k=1}^{K}w_kD_{\mathrm{KL}}(p_\theta\Vert q_k)
=
D_{\mathrm{KL}}(p_\theta\Vert\widetilde q_{\mathrm{mix}})
-
\log Z_{\mathrm{mix}}.
\label{eq:apd-mixture-kl}
\end{equation}

\paragraph{Implementation consequence.}
Within each distillation update, the sampled states, selected support, teacher scores, and allocation weights are held fixed for differentiation.
Using $\ell_{\mathrm{mix}}$ directly in the affine expression preserves the scalar loss exactly.
Our implementation normalizes the mixed log-target and evaluates a single reverse-KL loss
$D_{\mathrm{KL}}(p_\theta\Vert\widetilde q_{\mathrm{mix}})$.
By \eqref{eq:apd-mixture-kl}, this loss has the same student gradient as the original weighted objective because $Z_{\mathrm{mix}}$ is constant with respect to the student parameters during the update.

\subsection{Scope of the theoretical interpretation}
\label{apd:disp-scope}

\paragraph{Utility interpretation.}
Under Assumption~\ref{asm:converged}, \S\ref{apd:displacement} establishes an exact connection between centered teacher-reference displacement and utility contrast.
The derivation considers a fixed full-support reference and a KL-regularized sequential objective with a positive teacher-specific coefficient $\beta_k$ that is constant across states.
This setting provides a theoretical interpretation of the displacement signal and its calibration.

\paragraph{Applicability beyond KL-regularized training.}
TrustMOPD operates directly on the observed teacher and reference distributions and does not require an explicit KL penalty during teacher training.
Even for teachers trained without KL regularization, centered displacement preserves the teacher-reference changes in pairwise log odds in \eqref{eq:apd-logodds-contrast}.
The centering property in Proposition~\ref{prop:gauge}, the frozen-divisor analysis in \S\ref{apd:proofs}, and the aggregation identity in Proposition~\ref{prop:mixture} likewise hold under their stated support and normalization conditions.
These distribution-level properties provide the basis for applying TrustMOPD across teacher-training configurations, while Assumption~\ref{asm:converged} supplies the additional utility interpretation.

\section{Experimental Details}
\label{apd:exp}

This appendix documents the experimental setup.
\S\ref{apd:exp-models} lists the model checkpoints, and \S\ref{apd:exp-data} describes training-data construction and decontamination, including the settings used for backbone and data-source transfer, respectively.
\S\ref{apd:exp-baselines} defines the baselines and reference methods.
\S\ref{apd:exp-training} describes the training and calibration protocol, \S\ref{apd:exp-hparams} lists the hyperparameters, and \S\ref{apd:exp-algorithms} provides the calibration and training algorithms.

\subsection{Models}
\label{apd:exp-models}

\begin{table}[!t]
\centering
\footnotesize
\setlength{\tabcolsep}{4pt}
\caption{
\textbf{Model checkpoints used in our experiments.}
Each block contains a shared pre-RL checkpoint and three distinct RL-finetuned teachers used for mathematics, code, and instruction following, respectively.
The \colorbox{ctrlgray}{shaded} row serves as both the student initialization $\pi_{\theta_0}$ and the reference $\pi_{\mathrm{ref}}$ for computing teacher displacement.
The default setting uses the Open-MOPD SmolLM3-3B checkpoints.
The backbone transfer setting uses three publicly released teachers sharing DeepSeek-R1-Distill-Qwen-7B as their pre-RL initialization.
All identifiers refer to Hugging Face repositories.
}
\vspace{1mm}
\label{tab:apd_checkpoints}
\begin{tabular}{@{}lll@{}}
\toprule
Role & HuggingFace identifier & Reference \\
\midrule
\multicolumn{3}{@{}p{\textwidth}@{}}{\emph{Default setting: SmolLM3-3B (Table~\ref{tab:main}, and blocks (a,b,d) of Figure~\ref{fig:robustness-generalization})}} \\
\ctrlc{$\pi_{\theta_0}$ and $\pi_{\mathrm{ref}}$}
& \ctrlc{\texttt{BytedTsinghua-SIA/Open-MOPD-SmolLM3-3B-MixSFT}}
& \ctrlc{\citet{bakouch2025smollm3}} \\
Teacher, math & \texttt{BytedTsinghua-SIA/Open-MOPD-SmolLM3-3B-RL-Math} & \citet{openmopd2026} \\
Teacher, code & \texttt{BytedTsinghua-SIA/Open-MOPD-SmolLM3-3B-RL-Code} & \citet{openmopd2026} \\
Teacher, IF & \texttt{BytedTsinghua-SIA/Open-MOPD-SmolLM3-3B-RL-IF} & \citet{openmopd2026} \\
\midrule
\multicolumn{3}{@{}p{\textwidth}@{}}{\emph{Backbone transfer: DeepSeek-R1-Distill-Qwen-7B (block (c) of Figure~\ref{fig:robustness-generalization}; teachers not trained by us)}} \\
\ctrlc{$\pi_{\theta_0}$ and $\pi_{\mathrm{ref}}$}
& \ctrlc{\texttt{deepseek-ai/DeepSeek-R1-Distill-Qwen-7B}}
& \ctrlc{\citet{deepseek2025r1}} \\
Teacher, math & \texttt{nvidia/AceMath-RL-Nemotron-7B} & \citet{chen2025acemathrl} \\
Teacher, code & \texttt{nvidia/AceReason-Nemotron-7B} & \citet{chen2025acereason} \\
Teacher, IF & \texttt{THU-KEG/R1-Distill-Qwen-7B-VerIF} & \citet{peng2025verif} \\
\bottomrule
\end{tabular}

\vspace{1mm}
\parbox{\textwidth}{\footnotesize Licenses differ across rows and none of them is unrestricted: the Open-MOPD models and the VerIF teacher are Apache-2.0, DeepSeek-R1-Distill-Qwen-7B is MIT, and the two NVIDIA teachers are released under the NVIDIA Open Model License.}
\end{table}

Table~\ref{tab:apd_checkpoints} lists the model repositories and checkpoints used in our experiments.
Each model group consists of a shared pre-RL reference and three publicly released teachers obtained by RL fine-tuning from that reference.
This relationship allows \eqref{eq:method-displacement} to measure each teacher's distributional change from its pre-RL initialization.

\paragraph{Default model setup.}
Our default experiments use the SmolLM3-3B checkpoints released by Open-MOPD.
The mixed-domain SFT checkpoint serves as both the student initialization $\pi_{\theta_0}$ and the shared reference $\pi_{\mathrm{ref}}$.
The former gives all compared distillation methods the same starting weights; the latter provides the common baseline for measuring teacher displacement.
Using the same checkpoint for both roles is an experimental choice, as the allocation rule does not require the student to be initialized from the reference.

The $K=3$ teachers specialize in mathematics, code, and instruction following, respectively, and were trained using reinforcement learning with verifiable rewards.
Their released checkpoints correspond to different training durations: a few hundred steps for the mathematics and code teachers and several thousand steps for the instruction-following teacher.
The instruction-following teacher also uses a shorter response-length limit.
These differences in training configuration motivate measuring each teacher's displacement relative to its own calibration baseline through \eqref{eq:method-scale}.
\S\ref{apd:scales} reports the resulting calibration constants.

\paragraph{Backbone transfer setup.}
For the backbone transfer experiment in Figure~\ref{fig:robustness-generalization}(c), we use DeepSeek-R1-Distill-Qwen-7B~\citep{deepseek2025r1} as both the student initialization and the shared reference.
We use three distinct publicly released teacher checkpoints: AceMath-RL-Nemotron-7B as the mathematics teacher~\citep{chen2025acemathrl}, AceReason-Nemotron-7B as the code teacher~\citep{chen2025acereason}, and R1-Distill-Qwen-7B-VerIF as the instruction-following teacher~\citep{peng2025verif}.
All three were obtained by RL fine-tuning from DeepSeek-R1-Distill-Qwen-7B, allowing teacher displacement to be computed relative to their shared pre-RL initialization.

We compute a separate calibration constant for each teacher using the same procedure as in the default setup.
This experiment replaces the student, reference, and all three teacher checkpoints while retaining the TrustMOPD allocation procedure.
Table~\ref{tab:apd_checkpoints} provides the exact repository identifiers.

\subsection{Training data}
\label{apd:exp-data}

The two default training sets, \textsc{SingleCap} and \textsc{MultiCap}, each contain $7458$ prompts drawn from the same pool of source data.
For each set, we follow the corresponding Open-MOPD data composition, preserving its per-category sample counts and proportions~\citep{openmopd2026}.

\paragraph{Single-capability training set.}
\textsc{SingleCap} contains $1540$ mathematics, $1870$ code, and $4048$ instruction-following prompts.
Each prompt is assigned to one of these three domains, providing the domain label $d(x)$ used by the MOPD routing rule in \eqref{eq:mopd}.

\paragraph{Mixed-capability training set.}
\textsc{MultiCap} contains six prompt categories:
$256$ mathematics,
$256$ code,
$1579$ instruction-following,
$2131$ mathematics--instruction-following,
$2980$ code--instruction-following,
and $256$ mathematics--code prompts.
The three mixed-capability categories account for $5367$ prompts ($72.0\%$), while the remaining $2091$ prompts ($28.0\%$) belong to a single domain.

For a mixed-capability prompt, the composition identifies two relevant specialist domains but does not by itself specify which single teacher should supervise the entire response.
Applying a single-teacher routing rule to these prompts would therefore require an additional assignment rule.

\paragraph{Comparison protocol.}
Matching the total prompt counts keeps the training-set size fixed across the two settings.
Within each setting, the compared distillation methods use the same training prompts.
The two settings address complementary questions: \textsc{SingleCap} measures the effectiveness of allocation without domain labels against a label-based routing reference, while \textsc{MultiCap} evaluates allocation when individual prompts combine capabilities.

\subsubsection{Corpus for data-source transfer}
\label{apd:exp-data-external}

\begin{table}[!ht]
\centering
\footnotesize
\setlength{\tabcolsep}{4pt}
\caption{
\textbf{Sources and composition of the data-source transfer corpus.}
The two training sets are used in Figure~\ref{fig:robustness-generalization}(d).
Each column matches the category counts and proportions of its corresponding default training set, following the Open-MOPD composition.
Mixed-capability prompts combine a source problem with an instruction-following (IF) constraint block or a mathematical derivation block.
The mathematics--code category uses problems from KodCode-V1.
Construction details appear in \S\ref{apd:exp-data-external}.
}
\vspace{1mm}
\label{tab:apd_datasets}

\begin{tabular}{@{}
>{\raggedright\arraybackslash}p{0.235\textwidth}
>{\raggedright\arraybackslash}p{0.45\textwidth}
rr@{}}
\toprule
Prompt category
& Source and construction
& \textsc{SingleCap}
& \textsc{MultiCap} \\
\midrule

Mathematics
& \texttt{zwhe99/DeepMath-103K}~\citep{he2025deepmath}
& $1540$ & $256$ \\

Code
& \texttt{KodCode/KodCode-V1}~\citep{xu2025kodcode}
& $1870$ & $256$ \\

Instruction following
& \texttt{allenai/tulu-3-sft-personas-\allowbreak instruction-following}~\citep{lambert2024tulu3}
& $4048$ & $1579$ \\

\cmidrule{1-4}

Mathematics--IF
& DeepMath-103K $+$ IF constraint block
& --- & $2131$ \\

Code--IF
& KodCode-V1 $+$ IF constraint block
& --- & $2980$ \\

Mathematics--code
& KodCode-V1 $+$ mathematical derivation block
& --- & $256$ \\

\midrule
\multicolumn{2}{@{}l}{Total}
& $7458$ & $7458$ \\
\bottomrule
\end{tabular}
\end{table}

For the data-source transfer experiment in Figure~\ref{fig:robustness-generalization}(d), we replace the training prompts while retaining the default student initialization, reference, and teacher checkpoints.
Table~\ref{tab:apd_datasets} lists the replacement data sources.
The source datasets are released under MIT (DeepMath-103K), CC-BY-NC-4.0 (KodCode-V1), and ODC-BY (T\"ulu~3 persona instruction following).
All KodCode-based categories use problems from the \texttt{Algorithm} and \texttt{Data\_Structure} subsets of KodCode-V1.
We construct separate \textsc{SingleCap} and \textsc{MultiCap} training sets, each containing $7458$ prompts and matching the per-category counts of its default counterpart.
The overlap-screening procedure is described in \S\ref{apd:exp-decontam}.

\paragraph{Mathematics and code with instruction-following constraints.}
For the mathematics--instruction-following and code--instruction-following categories, we pair each replacement problem with an instruction-following constraint block from a corresponding default \textsc{MultiCap} prompt.
We replace the original problem statement and copy the constraint block verbatim, preserving its natural-language description and associated checker identifiers and arguments.
This construction preserves the number of occurrences of each checker identifier exactly.
It therefore changes the source problems while retaining the instruction-following constraints attached to them.

\paragraph{Mathematics--code composition.}
We construct mathematics--code prompts from the replacement code pool by appending the same fixed instruction block requesting a mathematical derivation.
Each prompt therefore combines an explicit programming task with a derivation requirement, while the underlying code problem varies across examples.

\paragraph{Matching criteria.}
In addition to per-category sample counts, we match the proportions of standalone instruction-following prompts containing one, two, or three constraints and the prompt-length buckets of the corresponding default sets.
Difficulty, mathematical topic, and code-task format are determined by the replacement sources and are not explicitly matched.

\paragraph{Use in distillation.}
The reconstructed sets supply prompts for student rollouts, with supervision provided by the fixed teachers through \eqref{eq:weighted-mopd}.
The T\"ulu rule checkers and KodCode unit tests are not used to compute training rewards.

\subsubsection{Decontamination}
\label{apd:exp-decontam}

\begin{table}[!t]
\centering
\footnotesize
\setlength{\tabcolsep}{4.5pt}
\caption{
\textbf{Decontamination statistics for the data-source transfer corpus.}
The four removal columns report within-pool duplicates and matches
against the comparison corpora using the rules in
\S\ref{apd:exp-decontam}.
All three candidate pools are screened against the same
$113{,}742$ comparison texts, covering evaluation datasets and
the default teachers' RL training prompts.
\emph{Exact} denotes normalized exact matching;
\emph{5-gram} denotes word-level $5$-gram Jaccard similarity
of at least $0.8$.
\emph{Kept} reports the candidate pool remaining before
quota-based sampling of the training sets in
Table~\ref{tab:apd_datasets}.
}
\vspace{1mm}
\label{tab:apd_decontam}

\begin{tabular}{@{}lrrrrrr@{}}
\toprule
& & \multicolumn{2}{c}{Within-pool}
  & \multicolumn{2}{c}{Cross-corpus} & \\
\cmidrule(lr){3-4}
\cmidrule(lr){5-6}
Pool & Candidates & Exact & 5-gram & Exact & 5-gram & Kept \\
\midrule

DeepMath-103K
& $103{,}022$
& $2{,}945$ & $1{,}010$
& $2{,}441$ & $194$
& $96{,}432$ \\

KodCode-V1
& $64{,}249$
& $25{,}718$ & $14$
& $0$ & $0$
& $38{,}517$ \\

T\"ulu~3 personas IF
& $29{,}980$
& $18$ & $0$
& $0$ & $0$
& $29{,}962$ \\

\bottomrule
\end{tabular}
\end{table}

We screen all upstream candidates for overlap with the comparison corpora and with previously accepted candidates before sampling the final training sets.
Table~\ref{tab:apd_decontam} reports the numbers removed by each rule and the sizes of the retained candidate pools.

\paragraph{Matching rules.}
A candidate is excluded if either of two criteria identifies a match.
The first is exact equality after Unicode NFKC normalization, case folding, and removal of non-alphanumeric characters.
The second is a word-level $5$-gram Jaccard similarity of at least $0.8$, computed exactly.
For code candidates, we additionally apply both criteria to the upstream parent instruction.
This step accounts for KodCode's construction of multiple examples from a shared seed problem and screens for overlap at the parent-instruction level as well as the rendered-prompt level.

\paragraph{Comparison corpora.}
The comparison pool contains $113{,}742$ texts.
It covers the six benchmarks in \S\ref{sec:exp-setup}, nine additional evaluation datasets spanning mathematics, code, and instruction following, and the $104{,}848$-prompt RL training pool used by the default teachers.

\paragraph{Screening and sampling order.}
A candidate is accepted only if it passes screening against both the comparison pool and the previously accepted candidates.
We then sample from the retained pools using the category and prompt-length quotas in \S\ref{apd:exp-data-external}.
The \emph{Kept} column in Table~\ref{tab:apd_decontam} reports these pools before sampling, rather than the final training sets.

\subsubsection{Example prompts}
\label{apd:exp-data-examples}

Figure~\ref{fig:apd-data-examples} illustrates three construction patterns in the data-source transfer corpus: an unchanged source prompt, a source problem paired with an instruction-following constraint, and a code problem paired with a mathematical derivation request.

\begin{figure}[!t]
\centering
\begin{tcolorbox}[
  colback=white, colframe=black!70, boxrule=0.6pt,
  title={Example prompts from the data-source transfer corpus},
  fonttitle=\bfseries\small,
  coltitle=white, colbacktitle=black!70,
  top=4pt, bottom=4pt, left=5pt, right=5pt
]
\small

\noindent\textbf{(a) \textsc{SingleCap}: instruction following}

\noindent{\scriptsize Source:
\texttt{allenai/tulu-3-sft-personas-instruction-following}}

\noindent\colorbox{gray!10}{\parbox{0.975\linewidth}{\footnotesize
Write a technical report on a new statistical method you have researched. Divide the report into three sections. In the first section, provide a detailed explanation of the theoretical foundation of the method. [\ldots] End the report with the exact sentence: ``This method bridges the gap between theory and practice.''
}}

\noindent{\scriptsize
$\triangleright$ The source prompt is used unchanged, with no appended instructions.
Its constraint annotations
(\texttt{specific ending}, \texttt{format:number of sections})
are retained as metadata.
}

\vspace{5pt}
\noindent\textbf{(b) \textsc{MultiCap}: mathematics--instruction following}

\noindent{\scriptsize Source:
\texttt{zwhe99/DeepMath-103K}}

\noindent\colorbox{gray!10}{\parbox{0.975\linewidth}{\footnotesize
On the unit sphere defined by \(x^2 + y^2 + z^2 = 1\), consider \(n\) points \(A_1, A_2, \ldots, A_n\) where \(n \geq 2\). Determine the maximum possible value of the sum \(\sum_{1 \leq i < j \leq n} |A_i A_j|^2\).

\vspace{3pt}
\textcolor{ifgreen}{\rule{\linewidth}{0.4pt}}
\vspace{1pt}

\textcolor{ifgreen}{There should be 2 paragraphs. Paragraphs are separated with the markdown divider: ***}
}}

\noindent{\scriptsize
$\triangleright$ The source mathematics problem is followed by an
\textcolor{ifgreen}{instruction-following constraint}
copied verbatim from a matched default \textsc{MultiCap} prompt.
The associated checker identifier
\texttt{paragraphs:paragraphs}
and arguments \texttt{\{\}} are retained as metadata.
The original problem from the matched prompt is replaced.
}

\vspace{5pt}
\noindent\textbf{(c) \textsc{MultiCap}: mathematics--code}

\noindent{\scriptsize Source:
\texttt{KodCode/KodCode-V1}}

\noindent\colorbox{gray!10}{\parbox{0.975\linewidth}{\footnotesize
\#\#\# Binary Search Tree (BST) Deletion --- Implement the delete operation in a Binary Search Tree and ensure the tree retains its properties after deletion. Write a function \texttt{delete\_node} that deletes a node with a given value from a BST. [\ldots]

\vspace{3pt}
\textcolor{mathteal}{\rule{\linewidth}{0.4pt}}
\vspace{1pt}

\textcolor{mathteal}{Alongside your implementation, derive the answer mathematically: state the closed-form expression or recurrence your algorithm is computing, justify it, and give its complexity. Put the final expression in \texttt{\textbackslash boxed\{\}}.}
}}

\noindent{\scriptsize
$\triangleright$ The source code problem is followed by a fixed
\textcolor{mathteal}{mathematical derivation request}.
The appended block is identical across all $256$ prompts in this category.
}

\end{tcolorbox}

\caption{
\textbf{Examples of the data construction described in
\S\ref{apd:exp-data-external}.}
The displayed text preserves the prompt wording, with omissions in problem statements marked by $[\ldots]$.
Colored separators and explanatory annotations are added for presentation and are not part of the prompts.
Code--instruction-following prompts follow the construction in (b), using a code problem in place of the mathematics problem.
Each example is the first prompt in its category in stored order.
Category sizes are reported in Table~\ref{tab:apd_datasets}.
}
\label{fig:apd-data-examples}
\end{figure}

\subsection{Baselines and references}
\label{apd:exp-baselines}

The trained allocation baselines follow the common protocol in
\S\ref{apd:exp-training}.
Within each experimental setting, they use the same student initialization and training prompts, with teacher supervision determined by the allocation rules below.

\paragraph{Fixed teacher weighting.}
\textsc{Uniform} assigns equal weight to all teachers,
$w_{k,t}=1/K$, at every token, providing a baseline for evaluating adaptive allocation.
The three single-teacher baselines each select one fixed specialist $k_0$,
with $w_{k,t}=\mathbb{I}\{k=k_0\}$ throughout training.
They perform on-policy distillation from the mathematics, code, or instruction-following teacher, respectively, measuring the performance obtained from each specialist individually.

\paragraph{Random token-level weighting.}
\textsc{Random} independently samples a weight vector at each token:
$w_{\cdot,t}\sim\operatorname{Dirichlet}(1,\ldots,1)$.
This distribution is uniform over nonnegative teacher-weight vectors that sum to one.
The weights vary across tokens but are independent of the student-generated state $s_t$.
This baseline tests whether non-uniform token-level weighting alone can account for the gains from adaptive allocation.

\paragraph{Label-based routing.}
\textsc{MOPD} uses the supplied domain label $d(x)$ to select one teacher for the entire response, as in \eqref{eq:mopd}:
$w_{k,t}=\mathbb{I}\{k=d(x)\}$ for every token $t$.
It serves as a label-based routing reference on \textsc{SingleCap}, where every prompt has one assigned domain.
We omit it on \textsc{MultiCap}, where $72.0\%$ of prompts combine two capabilities and lack a unique domain label for single-teacher routing.

\paragraph{References without distillation.}
We report three references that do not use the distillation training prompts and are therefore shared across \textsc{SingleCap} and \textsc{MultiCap} for a fixed model setup.
The initial student establishes the performance before distillation.
The parameter-averaged teacher uses the arithmetic mean of the three specialists' parameters,
$\theta_{\mathrm{avg}}=K^{-1}\sum_{k=1}^{K}\theta_k$,
providing a model-merging reference without additional training.
\textsc{Routed Teachers} uses the evaluation-domain label to select the corresponding specialist to generate each response.
It therefore retains three teacher models and requires domain labels at inference time.

\subsection{Training and calibration protocol}
\label{apd:exp-training}

\paragraph{Shared training settings.}
All distillation methods use the training settings of the underlying MOPD implementation, without method-specific hyperparameter tuning.
Each run consists of $116$ training steps with a batch size of $64$.
Prompts are truncated to at most $4096$ tokens, and the maximum response length is $8192$ tokens.

\paragraph{Candidate-token scoring.}
At each student-generated state, we select the student's top $C=128$ candidate tokens.
Teacher and reference probabilities are renormalized over this shared candidate set before computing the centered displacements in \S\ref{sec:method-displacement}.
The student distribution is likewise renormalized over the candidate set when forming the distillation objective.
The teacher log-probabilities are combined using the allocation weights, and the resulting mixed log-target is normalized over the candidate set to form the reverse-KL distillation target.
During each update, sampled states and candidate sets are held fixed, and teacher scores and allocation weights are detached.

\paragraph{Calibration.}
Before distillation training, we estimate the teacher-specific constants $\mu_k$ in \eqref{eq:method-scale} using eight batches of rollouts generated by the initial student.
Calibration uses the same candidate-selection rule and response-length limit as training.
The resulting constants are frozen throughout the training run, while displacement scores are recomputed on the evolving student's states.

We calibrate separately for each model setup and training prompt set.
This accounts for the dependence of the calibration scale on the teacher, prompt distribution, and response-length window; \S\ref{apd:scales} reports the corresponding constants.
Calibration uses no domain labels.
For variants using calibrated scores, we follow the same protocol and re-estimate the constants for each scoring rule.

\subsection{Hyperparameters}
\label{apd:exp-hparams}

\begin{table}[!t]
\centering
\footnotesize
\setlength{\tabcolsep}{5pt}
\caption{
\textbf{Training, calibration, and evaluation settings.}
\emph{Shared} denotes settings common to the compared distillation methods.
\emph{Calibrated} denotes settings used by TrustMOPD and the proxy ablations that consume calibrated scores in Table~\ref{tab:apd_ablation}.
The default allocation exponent is $\gamma=7$; its sensitivity analysis uses $\{3,5,7,9\}$.
Evaluation sampling settings are specified separately from training rollouts.
}
\vspace{1mm}
\label{tab:apd_hparams}

\begin{tabular}{@{}
>{\raggedright\arraybackslash}p{0.30\textwidth}
>{\raggedright\arraybackslash}p{0.35\textwidth}
>{\centering\arraybackslash}p{0.15\textwidth}
@{}}
\toprule
Hyperparameter & Value & Applies to \\
\midrule

\multicolumn{3}{@{}l}{\emph{Optimization}} \\
Optimizer
& AdamW ($\beta_1=0.9$, $\beta_2=0.999$)
& \ctrlc{Shared} \\
Peak learning rate
& $1\times10^{-6}$
& \ctrlc{Shared} \\
LR schedule
& Constant after linear warmup
& \ctrlc{Shared} \\
Warmup ratio
& $0.03$
& \ctrlc{Shared} \\
Weight decay
& $0.0$
& \ctrlc{Shared} \\
Gradient clipping
& $1.0$
& \ctrlc{Shared} \\
Training steps
& $116$
& \ctrlc{Shared} \\
Batch size (prompts per step)
& $64$
& \ctrlc{Shared} \\
Micro-batch size per device
& $4$
& \ctrlc{Shared} \\
Precision
& bfloat16
& \ctrlc{Shared} \\

\midrule
\multicolumn{3}{@{}l}{\emph{Rollout generation}} \\
Rollout temperature
& $0.6$
& \ctrlc{Shared} \\
Rollout top-$p$
& $1.0$
& \ctrlc{Shared} \\
Rollouts per prompt
& $1$
& \ctrlc{Shared} \\
Max prompt length
& $4096$
& \ctrlc{Shared} \\
Max response length
& $8192$
& \ctrlc{Shared} \\

\midrule
\multicolumn{3}{@{}l}{\emph{Distillation objective}} \\
Divergence
& Reverse KL, $D_{\mathrm{KL}}(p_{\theta,t}\Vert q_{k,t})$
& \ctrlc{Shared} \\
Candidate-set size $C$
& $128$ (student top-$C$)
& \ctrlc{Shared} \\
Student renormalization
& On the candidate set
& \ctrlc{Shared} \\
Teacher-pool size $K$
& $3$ (math, code, IF)
& \ctrlc{Shared} \\
Loss aggregation
& Mean of per-response token means
& \ctrlc{Shared} \\

\midrule
\multicolumn{3}{@{}l}{\emph{Allocation}} \\
\oursc{Sharpness exponent $\gamma$}
& \oursc{$7$; sweep: $\{3,5,7,9\}$}
& \oursc{TrustMOPD} \\

\midrule
\multicolumn{3}{@{}l}{\emph{Offline calibration of $\mu_k$
(Algorithm~\ref{alg:calibration})}} \\
Calibration batches
& $8$, generated by $\pi_{\theta_0}$
& Calibrated \\
Domain labels used
& None
& Calibrated \\
Calibration configuration
& Matched model pool, prompt set, response-length limit, and $C$
& Calibrated \\
Update schedule
& Estimated before training, then frozen
& Calibrated \\

\midrule
\multicolumn{3}{@{}l}{\emph{Evaluation and reporting}} \\
AIME 25/26
& $16$ samples, temperature $0.6$
& \ctrlc{Shared} \\
LiveCodeBench v5/v6
& $5$ samples, temperature $0.6$
& \ctrlc{Shared} \\
IFEval / IFBench
& $1$ sample, temperature $0.6$
& \ctrlc{Shared} \\
Checkpoint reported
& Final; no evaluation-based selection
& \ctrlc{Shared} \\
Training seeds
& $5$
& \ctrlc{Shared} \\

\bottomrule
\end{tabular}
\end{table}

Table~\ref{tab:apd_hparams} summarizes the training, allocation, calibration, and evaluation settings.
The shared training configuration follows the underlying MOPD implementation and is kept fixed across the compared distillation methods, without method-specific tuning.
The allocation exponent is set to $\gamma=7$ in the main experiments and varied over $\{3,5,7,9\}$ in the sensitivity analysis reported in Appendix~\ref{apd:gamma}.

Calibration settings apply to TrustMOPD and the proxy ablations that use calibrated scores.
For each model setup and prompt set, we estimate $\mu_k$ using the response-length limit and candidate-set size of the corresponding training configuration.
Changes to either setting require recalibration using Algorithm~\ref{alg:calibration}.

\subsection{Algorithms}
\label{apd:exp-algorithms}

\begin{algorithm}[!t]
\caption{Offline calibration of teacher scales}
\label{alg:calibration}
\begin{algorithmic}[1]
\Require Initial student $\pi_{\theta_0}$;
teachers $\{\pi_k\}_{k=1}^{K}$;
reference $\pi_{\mathrm{ref}}$;
unlabeled prompts $\mathcal{D}_{\mathrm{cal}}$;
candidate-set size $C$;
number of calibration batches $B_{\mathrm{cal}}=8$
\Ensure Frozen calibration constants $\{\mu_k\}_{k=1}^{K}$

\State $S_k\gets 0$ for all $k$; $N\gets 0$
\For{$b=1,\ldots,B_{\mathrm{cal}}$}
  \State Sample a prompt batch $\mathcal{B}_b$
  from $\mathcal{D}_{\mathrm{cal}}$
  \For{each prompt $x_i\in\mathcal{B}_b$}
    \State Generate $y_i\sim\pi_{\theta_0}(\cdot\mid x_i)$
    \For{$t=1,\ldots,|y_i|$}
      \State $s_t\gets(x_i,y_{i,<t})$;
      $\mathcal{A}_t\gets
      \operatorname{TopC}(\pi_{\theta_0}(\cdot\mid s_t))$
      \State Compute teacher distributions $\{q_{k,t}\}_{k=1}^{K}$
      and reference $q_{\mathrm{ref},t}$, normalized on $\mathcal{A}_t$
      \State $S_k\gets S_k+
      \left\|\operatorname{center}
      (\log q_{k,t}-\log q_{\mathrm{ref},t})\right\|_2$
      for all $k$
      \State $N\gets N+1$
    \EndFor
  \EndFor
\EndFor
\State $\mu_k\gets S_k/N$ for all $k$
\State \Return $\{\mu_k\}_{k=1}^{K}$
\end{algorithmic}
\end{algorithm}

\begin{algorithm}[!t]
\caption{TrustMOPD training}
\label{alg:trustmopd}
\begin{algorithmic}[1]
\Require Student $\pi_\theta$;
fixed teachers $\{\pi_k\}_{k=1}^{K}$;
fixed reference $\pi_{\mathrm{ref}}$;
unlabeled prompts $\mathcal{D}$;
frozen scales $\{\mu_k>0\}_{k=1}^{K}$
from Algorithm~\ref{alg:calibration};
exponent $\gamma>0$;
candidate-set size $C$

\For{each training step}
  \State Sample a prompt batch $\mathcal{B}$ from $\mathcal{D}$
  \For{each prompt $x_i\in\mathcal{B}$}
    \State Generate $y_i\sim\pi_\theta(\cdot\mid x_i)$
    \For{$t=1,\ldots,|y_i|$}
      \State $s_t\gets(x_i,y_{i,<t})$;
      $\mathcal{A}_t\gets
      \operatorname{TopC}(\pi_\theta(\cdot\mid s_t))$
      \State Compute student distribution $p_{\theta,t}$,
      normalized on $\mathcal{A}_t$
      \State Compute teacher distributions $\{q_{k,t}\}_{k=1}^{K}$
      and reference $q_{\mathrm{ref},t}$, normalized on $\mathcal{A}_t$
      \State $\rho_{k,t}\gets
      \left\|\operatorname{center}
      (\log q_{k,t}-\log q_{\mathrm{ref},t})\right\|_2$
      for all $k$
      \State $w_{k,t}\gets
      \operatorname{stopgrad}\!\left(
      \frac{(\rho_{k,t}/\mu_k)^\gamma}
      {\sum_{j=1}^{K}(\rho_{j,t}/\mu_j)^\gamma}
      \right)$ for all $k$
      \State $\widetilde q_{\mathrm{mix},t}\gets
      \operatorname{softmax}_{\mathcal{A}_t}\!\left(
      \sum_{k=1}^{K}w_{k,t}\log q_{k,t}
      \right)$
      \State $\ell_{i,t}(\theta)\gets
      D_{\mathrm{KL}}\!\left(
      p_{\theta,t}\Vert\widetilde q_{\mathrm{mix},t}
      \right)$
    \EndFor
    \State $\mathcal{L}_i(\theta)\gets
    |y_i|^{-1}\sum_{t=1}^{|y_i|}\ell_{i,t}(\theta)$
  \EndFor
  \State $\mathcal{L}(\theta)\gets
  |\mathcal{B}|^{-1}\sum_{x_i\in\mathcal{B}}\mathcal{L}_i(\theta)$
  \State Update $\theta$ using $\nabla_\theta\mathcal{L}(\theta)$
\EndFor
\end{algorithmic}
\end{algorithm}

Algorithm~\ref{alg:calibration} estimates frozen teacher scales by averaging displacement magnitudes over all response tokens generated by the initial student.
Algorithm~\ref{alg:trustmopd} trains the evolving student using these scales, averaging the distillation loss first within each response and then across the batch.
Both procedures use unlabeled prompts.
During differentiation, sampled trajectories and candidate sets are held fixed, and teacher scores and allocation weights are detached.
Token loops describe computations at each state; model scoring can be batched across states.

\section{Additional Experimental Results}
\label{apd:results}

This appendix provides detailed numerical results supporting the analyses in \S\ref{sec:exp-allocation} and \S\ref{sec:exp-ablation}.
The first two subsections report calibrated-score diagnostics and calibration constants, both obtained without distillation training.
\S\ref{apd:alloc}--\ref{apd:gamma} present detailed results for supervision allocation, ablations, sensitivity analyses, and transfer experiments, including per-domain scores and standard deviations.

\subsection{The calibrated score before any distillation}
\label{apd:diag}

\begin{table}[!t]
\centering
\small
\setlength{\tabcolsep}{10pt}
\caption{
\textbf{Calibrated teacher scores before distillation.}
Scores are computed from $256$ initial-student rollouts per domain on held-out prompts.
Entries report means over response states, with standard deviations across prompts.
Rows identify teachers, and columns identify prompt domains.
Within each column, the highest mean score is
\colorbox{oursblue}{tinted}.
}
\vspace{1mm}
\label{tab:apd_rmatrix}
\begin{tabular}{lccc}
\toprule
& \multicolumn{3}{c}{Prompt domain} \\
\cmidrule(lr){2-4}
Teacher & Mathematics & Code & Instruction following \\
\midrule
RL-math & \oursc{$1.120$\std{0.210}} & $0.987$\std{0.076} & $0.541$\std{0.223} \\
RL-code & $0.846$\std{0.221} & \oursc{$1.343$\std{0.424}} & $0.472$\std{0.236} \\
RL-IF   & $0.719$\std{0.067} & $0.714$\std{0.072} & \oursc{$6.868$\std{3.812}} \\
\bottomrule
\end{tabular}
\end{table}

Table~\ref{tab:apd_rmatrix} evaluates the calibrated scores $r_{k,t}$ of \eqref{eq:method-scale} on held-out prompts using rollouts from the initial student.
This complements \textbf{Obs.\ 3} in \S\ref{sec:exp-allocation}, which examines allocation weights from a trained TrustMOPD run.

Within each prompt-domain column, the matching specialist has the highest mean calibrated score.
The separation is particularly pronounced for instruction-following prompts, where the instruction-following teacher scores $6.868$, compared with $0.541$ and $0.472$ for the mathematics and code teachers.
These results show that the allocation signal already exhibits domain-aligned structure before distillation, with scores measured relative to each teacher's calibration mean.

\subsection{Calibration constants}
\label{apd:scales}

\begin{table}[!t]
\centering
\small
\setlength{\tabcolsep}{8pt}
\caption{
\textbf{Calibration constants in the default experiments.}
Values are the $\mu_k$ of \eqref{eq:method-scale} for the SmolLM3-3B teacher pool on the two training prompt sets.
Both settings use the same initial student and a maximum response length of $8192$ tokens.
}
\vspace{1mm}
\label{tab:apd_scales}
\begin{tabular}{lccc}
\toprule
Training prompt set
& $\mu_{\mathrm{math}}$
& $\mu_{\mathrm{code}}$
& $\mu_{\mathrm{IF}}$ \\
\midrule
\textsc{SingleCap} & $3.640$ & $7.482$ & $1.722$ \\
\textsc{MultiCap}  & $3.809$ & $8.223$ & $1.269$ \\
\bottomrule
\end{tabular}
\end{table}

Table~\ref{tab:apd_scales} reports the calibration constants $\mu_k$ used in the default experiments.
The code teacher's mean displacement is approximately $4.3$ times that of the instruction-following teacher on \textsc{SingleCap} and $6.5$ times on \textsc{MultiCap}.
Dividing by $\mu_k$ expresses each teacher's displacement relative to its own calibration baseline.

The constants also differ between the two training prompt sets under the same model setup.
We therefore estimate them separately for each model setup and training prompt set, following \S\ref{apd:exp-training}, including in the backbone and data-source transfer experiments.

\subsection{Allocation analysis}
\label{apd:alloc}

\begin{table}[!t]
\centering
\small
\setlength{\tabcolsep}{5pt}
\renewcommand{\arraystretch}{1.6}
\caption{
\textbf{Rollout-level allocation statistics underlying Figure~\ref{fig:allocation-analysis}(b--d).}
Teacher columns report group means of $\bar{w}_{i,k}=|y_i|^{-1}\sum_t w_{i,k,t}$; each row sums to one across teachers, up to rounding.
\colorbox{oursblue}{Tinted} entries identify domain-matched teachers for \textsc{SingleCap} and the two composition-matched teachers for \textsc{MultiCap}; \emph{Relevant mass} is their combined mean weight.
For \textsc{SingleCap}, we also report the mean and positive fraction of the margin $m_i=\bar{w}_{i,d_i}-\max_{k\neq d_i}\bar{w}_{i,k}$, where $d_i$ is the prompt domain, as in Figure~\ref{fig:allocation-analysis}(c).
The \colorbox{ctrlgray}{shaded} row gives uniform allocation; its relevant mass is shown for \textsc{SingleCap} / \textsc{MultiCap}, and its margin statistics apply to \textsc{SingleCap}.
Group labels are used only for this analysis, not for allocation.
}
\vspace{1mm}
\label{tab:apd_weights}
\begin{tabular}{@{}c@{\hspace{5pt}}lcccccc@{}}
\toprule
\multirow{2.4}{*}{Data}
& \multirow{2.4}{*}{Prompt group}
& \multicolumn{3}{c}{Mean weight $\bar{w}_{i,k}$}
& \multirow{2.4}{*}{\shortstack{Relevant\\mass}}
& \multirow{2.4}{*}{\shortstack{Mean\\margin $m_i$}}
& \multirow{2.4}{*}{$\Pr[m_i>0]$} \\
\cmidrule(lr){3-5}
& & RL-math & RL-code & RL-IF & & & \\
\midrule
& \ctrlc{Uniform allocation $1/K$}
& \ctrlc{$0.33$} & \ctrlc{$0.33$} & \ctrlc{$0.33$}
& \ctrlc{$0.33$ / $0.67$}
& \ctrlc{$0.00$} & \ctrlc{$0\%$} \\
\midrule
\multirow{3}{*}{\rotatebox[origin=c]{90}{\textsc{SingleCap}}}
& Mathematics
& \oursc{$0.69$} & $0.19$ & $0.12$
& \oursc{$0.69$} & $0.46$ & $98\%$ \\
& Code
& $0.30$ & \oursc{$0.65$} & $0.05$
& \oursc{$0.65$} & $0.34$ & $91\%$ \\
& Instruction following
& $0.00$ & $0.00$ & \oursc{$0.99$}
& \oursc{$0.99$} & $0.99$ & $100\%$ \\
\midrule
\multirow{3}{*}{\rotatebox[origin=c]{90}{\textsc{MultiCap}}}
& Math $+$ IF
& \oursc{$0.51$} & $0.11$ & \oursc{$0.38$}
& \oursc{$0.89$} & --- & --- \\
& Code $+$ IF
& $0.25$ & \oursc{$0.47$} & \oursc{$0.28$}
& \oursc{$0.75$} & --- & --- \\
& Math $+$ Code
& \oursc{$0.27$} & \oursc{$0.46$} & $0.27$
& \oursc{$0.73$} & --- & --- \\
\bottomrule
\end{tabular}
\end{table}

Table~\ref{tab:apd_weights} reports the rollout-level allocation statistics underlying Figure~\ref{fig:allocation-analysis}(b--d).
On \textsc{SingleCap}, the domain-matched teacher receives the largest rollout-averaged weight on $98\%$, $91\%$, and $100\%$ of mathematics, code, and instruction-following rollouts, respectively.
The alignment therefore holds across most individual rollouts as well as in the group means.
On \textsc{MultiCap}, the two teachers corresponding to each capability composition receive a combined mean weight of $0.89$, $0.75$, and $0.73$ for mathematics--instruction following, code--instruction following, and mathematics--code, respectively.

Figure~\ref{fig:apd-alloc-tokens} complements these aggregate statistics with token-level examples from all three compositions.
The examples show allocation varying across positions within a response. 
The response-level ablation in Table~\ref{tab:apd_ablation} evaluates the contribution of this positional variation.

\begin{figure}[!t]
\centering
\includegraphics[width=\linewidth]{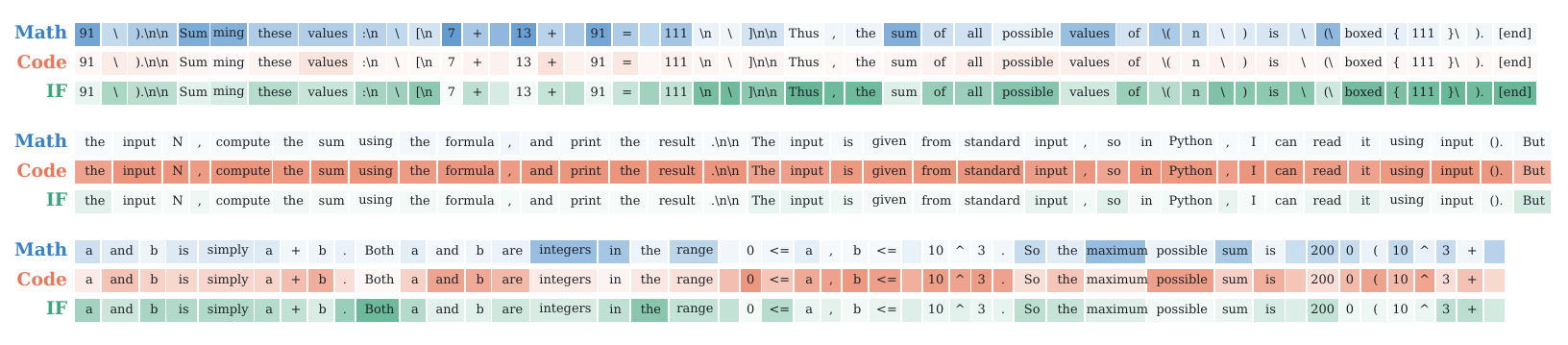}
\caption{
\textbf{Token-level allocation across all three
\textsc{MultiCap} compositions.}
This figure extends the mathematics--instruction-following
example in Figure~\ref{fig:allocation-analysis}(e)
with examples covering all three compositions.
Each block displays the same response tokens once per teacher;
darker shading indicates a higher allocation weight
at that position.
}
\label{fig:apd-alloc-tokens}
\end{figure}

\subsection{Core component ablations}
\label{apd:ablation}

\subsubsection{Definitions of the ablated variants}
\label{apd:ablation-defs}

This subsection defines the ablation variants in Figure~\ref{fig:component-ablation}.
Throughout, $q_{k,t}$ and $p_{\theta,t}$ denote teacher and student distributions normalized on the student's top-$C$ candidate set $\mathcal{A}_t$, following
\S\ref{apd:exp-training}. Full TrustMOPD combines
\eqref{eq:method-displacement},
\eqref{eq:method-scale}, and
\eqref{eq:method-weight}:
\begin{equation}
\underbrace{\rho_{k,t}}_{\text{displacement}}
\ \longrightarrow\
\underbrace{r_{k,t}=\rho_{k,t}/\mu_k}_{\text{calibration}}
\ \longrightarrow\
\underbrace{w_{k,t}}_{\text{allocation}}.
\label{eq:ablation-pipeline}
\end{equation}
Each variant changes the component specified below while retaining the shared training settings in
\S\ref{apd:exp-training}.

\paragraph{Proxy substitutions.}
These variants replace the displacement score
$\rho_{k,t}$ with one of three alternative signals:
\begin{align}
\text{teacher likelihood:}\quad
&\rho^{\mathrm{lik}}_{k,t}:=q_{k,t}(y_t),
\label{eq:proxy-lik}\\
\text{negative entropy:}\quad
&\rho^{\mathrm{negH}}_{k,t}:=\log C-H(q_{k,t}),
\qquad
H(q):=-\!\!\sum_{v\in\mathcal{A}_t}\!q(v)\log q(v),
\label{eq:proxy-negent}\\
\text{teacher--student divergence:}\quad
&\rho^{\mathrm{tsd}}_{k,t}:=
D_{\mathrm{KL}}\bigl(q_{k,t}\,\Vert\,p_{\theta,t}\bigr).
\label{eq:proxy-tsd}\end{align}
Here, $y_t$ is the token sampled from the student at position $t$.
The negative-entropy score is shifted by $\log C$ to make it nonnegative.
The divergence score uses the teacher-to-student KL direction, opposite to the reverse-KL distillation loss in \eqref{eq:weighted-mopd}.

Each alternative score is calibrated separately using Algorithm~\ref{alg:calibration}, with $\rho_{k,t}$ replaced by $\rho^{\mathrm{alt}}_{k,t}$.
The resulting scale $\mu_k^{\mathrm{alt}}$ gives $r^{\mathrm{alt}}_{k,t}=\rho^{\mathrm{alt}}_{k,t}/\mu_k^{\mathrm{alt}}$ an empirical mean of one over the calibration tokens.
The calibrated scores are then converted into allocation weights using \eqref{eq:method-weight} with the same exponent $\gamma$.

\paragraph{Uncalibrated scores.}
``Without $\mu_k$'' uses raw displacement scores
without teacher-specific calibration:
\begin{equation}
\mu_k:=1\quad\text{for every }k,
\qquad
r_{k,t}=\rho_{k,t},
\qquad
w_{k,t}
=
\frac{\rho_{k,t}^{\,\gamma}}
{\sum_{j=1}^{K}\rho_{j,t}^{\,\gamma}}.
\label{eq:no-mu-ablation}
\end{equation}

\paragraph{Response-level allocation.}
This variant computes the token-level weights of full
TrustMOPD using \eqref{eq:method-weight}, then replaces
them with their mean within each sampled response:
\begin{equation}
\bar{w}_{i,k}
:=
\frac{1}{|y_i|}
\sum_{t=1}^{|y_i|}w_{i,k,t},
\qquad
w^{\mathrm{resp}}_{i,k,t}
:=
\bar{w}_{i,k}
\quad\text{for all }t\le|y_i|.
\label{eq:response-level-ablation}
\end{equation}
The averaged weights remain nonnegative and sum to one
across teachers.
For every teacher $k$ and rollout $i$, they also satisfy
\begin{equation}
\sum_{t=1}^{|y_i|}w^{\mathrm{resp}}_{i,k,t}
=
\sum_{t=1}^{|y_i|}w_{i,k,t}.
\label{eq:response-level-mass}
\end{equation}
Thus, on a given rollout, this transformation preserves
each teacher's total allocation weight while removing
its variation across token positions.
The ablation tests the contribution of within-response
allocation.

\paragraph{The control.}
\textsc{Uniform}, $w_{k,t}=1/K$, is defined in
\S\ref{apd:exp-baselines} and included in
Table~\ref{tab:apd_ablation} as the reference for assessing
whether each alternative scoring rule improves
performance over equal teacher weighting.

\subsubsection{Results}
\label{apd:ablation-results}

\begin{table}[!ht]
\centering
\footnotesize
\setlength{\tabcolsep}{3.2pt}
\caption{
\textbf{Core component ablations: absolute scores.}
Results correspond to Figure~\ref{fig:component-ablation}; variant definitions are given in \S\ref{apd:ablation-defs}.
Scores are means $\pm$ standard deviations over five training seeds, evaluated at the final checkpoint.
Within each training-set block, $\Delta_{\mathrm{full}}$ is the difference in mean overall score from full TrustMOPD.
}
\label{tab:apd_ablation}
\begin{tabular}{@{}c@{\hspace{5pt}}lccccc@{}}
\toprule
Data & Ablated component & Math & Code & IF & Overall $\uparrow$ & $\Delta_{\mathrm{full}}$ \\
\midrule
\multirow{7}{*}{\rotatebox[origin=c]{90}{\textsc{SingleCap}}}
& \ctrlc{\textsc{Uniform} \ $w_{k,t}=1/K$ (control)}
& \ctrlc{$20.14$\std{0.23}} & \ctrlc{$18.67$\std{0.38}} & \ctrlc{$45.04$\std{1.23}}
& \ctrlc{$27.95$\std{0.59}} & \ctrlc{$-2.09$} \\
& \oursc{TrustMOPD (full)}
& \oursc{$21.91$\std{0.61}} & \oursc{$20.67$\std{0.75}} & \oursc{$47.56$\std{0.48}}
& \oursc{$\mathbf{30.04}$\std{0.47}} & \oursc{$\phantom{-}0.00$} \\
\cmidrule(l){2-7}
& \quad proxy $\to$ teacher likelihood $q_{k,t}(y_t)$
& $20.02$\std{0.92} & $18.90$\std{0.67} & $46.29$\std{0.57}
& $28.41$\std{0.42} & $-1.63$ \\
& \quad proxy $\to$ negative entropy $-H(q_{k,t})$
& $19.96$\std{0.72} & $18.79$\std{0.69} & $46.47$\std{0.72}
& $28.41$\std{0.25} & $-1.63$ \\
& \quad proxy $\to$ teacher--student $D_{\mathrm{KL}}(q_{k}\Vert p_{\theta})$
& $19.66$\std{0.84} & $17.82$\std{0.37} & $46.40$\std{0.32}
& $27.96$\std{0.20} & $-2.08$ \\
\cmidrule(l){2-7}
& \quad without $\mu_k$ \ (set $\mu_k=1$)
& $20.79$\std{0.50} & $20.39$\std{0.64} & $47.99$\std{0.82}
& $29.73$\std{0.49} & $-0.31$ \\
\cmidrule(l){2-7}
& \quad response-level $\bar{w}_{i,k}\to\widetilde{w}_{i,k}$
& $21.57$\std{1.29} & $20.42$\std{1.10} & $47.97$\std{1.02}
& $29.99$\std{0.74} & $-0.05$ \\
\midrule
\multirow{7}{*}{\rotatebox[origin=c]{90}{\textsc{MultiCap}}}
& \ctrlc{\textsc{Uniform} \ $w_{k,t}=1/K$ (control)}
& \ctrlc{$20.20$\std{0.43}} & \ctrlc{$18.87$\std{0.52}} & \ctrlc{$43.88$\std{1.42}}
& \ctrlc{$27.65$\std{0.53}} & \ctrlc{$-2.76$} \\
& \oursc{TrustMOPD (full)}
& \oursc{$22.92$\std{1.40}} & \oursc{$20.69$\std{0.50}} & \oursc{$47.62$\std{1.41}}
& \oursc{$\mathbf{30.41}$\std{0.60}} & \oursc{$\phantom{-}0.00$} \\
\cmidrule(l){2-7}
& \quad proxy $\to$ teacher likelihood $q_{k,t}(y_t)$
& $20.44$\std{0.81} & $19.10$\std{0.83} & $45.46$\std{2.06}
& $28.33$\std{0.88} & $-2.08$ \\
& \quad proxy $\to$ negative entropy $-H(q_{k,t})$
& $20.54$\std{1.01} & $19.47$\std{0.65} & $46.78$\std{0.77}
& $28.93$\std{0.38} & $-1.48$ \\
& \quad proxy $\to$ teacher--student $D_{\mathrm{KL}}(q_{k}\Vert p_{\theta})$
& $19.04$\std{0.72} & $17.11$\std{0.64} & $45.91$\std{2.21}
& $27.35$\std{0.85} & $-3.06$ \\
\cmidrule(l){2-7}
& \quad without $\mu_k$ \ (set $\mu_k=1$)
& $20.79$\std{0.68} & $20.78$\std{0.39} & $45.83$\std{1.95}
& $29.13$\std{0.54} & $-1.28$ \\
\cmidrule(l){2-7}
& \quad response-level $\bar{w}_{i,k}\to\widetilde{w}_{i,k}$
& $21.21$\std{0.36} & $19.56$\std{0.78} & $47.71$\std{1.45}
& $29.49$\std{0.79} & $-0.92$ \\
\bottomrule
\end{tabular}
\end{table}

Table~\ref{tab:apd_ablation} reports the absolute scores corresponding to Figure~\ref{fig:component-ablation}, which shows changes relative to full TrustMOPD.
All ablated variants have lower mean overall scores than full TrustMOPD on both training sets, although some individual domain scores improve.

All three alternative scoring rules underperform reference-relative displacement after separate calibration.
Teacher--student divergence performs close to \textsc{Uniform} on \textsc{SingleCap} and below it on \textsc{MultiCap}. 
These comparisons support the choice of displacement over the tested alternative signals.

Response-level allocation leaves the mean overall score nearly unchanged on \textsc{SingleCap} ($0.05$ points lower), but reduces it by $0.92$ points on \textsc{MultiCap}.
This pattern suggests a larger benefit from within-response allocation when training prompts combine multiple capabilities.

\subsection{Allocation sharpness and transfer}
\label{apd:gamma}

\begin{table}[!t]
\centering
\footnotesize
\setlength{\tabcolsep}{3.2pt}
\caption{
\textbf{Sensitivity to the allocation sharpness exponent
$\gamma$.}
Absolute scores and allocation concentration corresponding
to Figure~\ref{fig:robustness-generalization}(a,b).
Within each training-set block,
$\Delta_{\mathrm{Uni}}$ is the difference in mean overall
score from \textsc{Uniform}.
Concentration is measured by
$\mathbb{E}_{s_t}[\max_k w_{k,t}]$,
which equals $1/K$ for \textsc{Uniform}.
Scores are means $\pm$ standard deviations over five
training seeds, evaluated at the final checkpoint.
}
\vspace{1mm}
\label{tab:apd_gamma}
\begin{tabular}{@{}c@{\hspace{5pt}}lcccccc@{}}
\toprule
Data & Allocation rule & Math & Code & IF & Overall $\uparrow$ & $\Delta_{\mathrm{Uni}}$ & $\mathbb{E}[\max_k w_k]$ \\
\midrule
\multirow{5}{*}{\rotatebox[origin=c]{90}{\textsc{SingleCap}}}
& \ctrlc{\textsc{Uniform} \ $w_{k,t}=1/K$}
& \ctrlc{$20.14$\std{0.23}} & \ctrlc{$18.67$\std{0.38}} & \ctrlc{$45.04$\std{1.23}}
& \ctrlc{$27.95$\std{0.59}} & \ctrlc{---} & \ctrlc{$0.33$} \\
& \oursc{TrustMOPD, $\gamma=3$}
& \oursc{$22.36$\std{0.65}} & \oursc{$19.96$\std{0.89}} & \oursc{$47.79$\std{0.98}}
& \oursc{$30.04$\std{0.27}} & \oursc{$+2.09$} & \oursc{$0.66$} \\
& \oursc{TrustMOPD, $\gamma=5$}
& \oursc{$21.65$\std{1.00}} & \oursc{$20.60$\std{0.40}} & \oursc{$47.76$\std{0.90}}
& \oursc{$30.00$\std{0.52}} & \oursc{$+2.05$} & \oursc{$0.76$} \\
& \oursc{TrustMOPD, $\gamma=7$ \emph{(default)}}
& \oursc{$21.91$\std{0.61}} & \oursc{$20.67$\std{0.75}} & \oursc{$47.56$\std{0.48}}
& \oursc{$30.04$\std{0.47}} & \oursc{$+2.09$} & \oursc{$0.83$} \\
& \oursc{TrustMOPD, $\gamma=9$}
& \oursc{$22.24$\std{0.84}} & \oursc{$20.40$\std{0.96}} & \oursc{$48.22$\std{0.80}}
& \oursc{$\mathbf{30.29}$\std{0.40}} & \oursc{$+2.34$} & \oursc{$0.86$} \\
\midrule
\multirow{5}{*}{\rotatebox[origin=c]{90}{\textsc{MultiCap}}}
& \ctrlc{\textsc{Uniform} \ $w_{k,t}=1/K$}
& \ctrlc{$20.20$\std{0.43}} & \ctrlc{$18.87$\std{0.52}} & \ctrlc{$43.88$\std{1.42}}
& \ctrlc{$27.65$\std{0.53}} & \ctrlc{---} & \ctrlc{$0.33$} \\
& \oursc{TrustMOPD, $\gamma=3$}
& \oursc{$21.07$\std{0.97}} & \oursc{$20.13$\std{0.59}} & \oursc{$46.71$\std{1.73}}
& \oursc{$29.31$\std{0.97}} & \oursc{$+1.66$} & \oursc{$0.56$} \\
& \oursc{TrustMOPD, $\gamma=5$}
& \oursc{$21.79$\std{1.07}} & \oursc{$20.28$\std{0.40}} & \oursc{$47.16$\std{1.74}}
& \oursc{$29.74$\std{0.74}} & \oursc{$+2.09$} & \oursc{$0.67$} \\
& \oursc{TrustMOPD, $\gamma=7$ \emph{(default)}}
& \oursc{$22.92$\std{1.40}} & \oursc{$20.69$\std{0.50}} & \oursc{$47.62$\std{1.41}}
& \oursc{$\mathbf{30.41}$\std{0.60}} & \oursc{$+2.76$} & \oursc{$0.74$} \\
& \oursc{TrustMOPD, $\gamma=9$}
& \oursc{$22.24$\std{1.14}} & \oursc{$20.59$\std{0.55}} & \oursc{$46.44$\std{1.01}}
& \oursc{$29.76$\std{0.43}} & \oursc{$+2.11$} & \oursc{$0.78$} \\
\bottomrule
\end{tabular}
\end{table}

Table~\ref{tab:apd_gamma} reports the scores and allocation concentration corresponding to Figure~\ref{fig:robustness-generalization}(a,b).
TrustMOPD achieves higher mean overall scores than \textsc{Uniform} at every tested $\gamma\in\{3,5,7,9\}$ on both training sets.

Increasing $\gamma$ produces more concentrated allocation, but does not consistently improve performance.
From $\gamma=7$ to $\gamma=9$, the mean overall score increases by $0.25$ points on \textsc{SingleCap} and decreases by $0.65$ on \textsc{MultiCap}.
Among the tested values, the default $\gamma=7$ achieves the highest mean score on \textsc{MultiCap} and remains close to the best result on \textsc{SingleCap}.

\begin{table}[!ht]
\centering
\footnotesize
\setlength{\tabcolsep}{3.5pt}

\caption{
\textbf{Transfer across model backbones and prompt sources.}
Absolute scores corresponding to Figure~\ref{fig:robustness-generalization}(c,d).
Shaded rows denote \textsc{Uniform} controls.
Within each model and training-data setting, $\Delta_{\mathrm{Uni}}$ reports the mean overall-score gain over the matched \textsc{Uniform} control.
Scores for trained methods are means $\pm$ standard deviations over five training seeds at the final checkpoint.
}
\vspace{1mm}
\label{tab:apd_transfer}
\begin{tabular}{@{}lcccccc@{}}
\toprule
Data & Allocation rule & Math & Code & IF & Overall $\uparrow$ & $\Delta_{\mathrm{Uni}}$ $\uparrow$ \\
\midrule
\multicolumn{7}{@{}p{\textwidth}@{}}{\emph{Default setting: SmolLM3-3B models and the original
training prompt sets (Table~\ref{tab:main})}} \\
\multicolumn{2}{@{}l}{\ctrlc{\textsc{Init} (student initialization)}}
& \ctrlc{$17.02$} & \ctrlc{$15.97$} & \ctrlc{$41.66$}
& \ctrlc{$24.89$} & \ctrlc{---} \\
\cmidrule{1-7}
\multirow{2}{*}{\textsc{SingleCap}}
& \ctrlc{\textsc{Uniform}}
& \ctrlc{$20.14$\std{0.23}} & \ctrlc{$18.67$\std{0.38}} & \ctrlc{$45.04$\std{1.23}}
& \ctrlc{$27.95$\std{0.59}} & \ctrlc{---} \\
& \oursc{TrustMOPD}
& \oursc{$21.91$\std{0.61}} & \oursc{$20.67$\std{0.75}} & \oursc{$47.56$\std{0.48}}
& \oursc{$30.04$\std{0.47}} & \oursc{$+2.09$\std{0.50}} \\
\cmidrule{1-7}
\multirow{2}{*}{\textsc{MultiCap}}
& \ctrlc{\textsc{Uniform}}
& \ctrlc{$20.20$\std{0.43}} & \ctrlc{$18.87$\std{0.52}} & \ctrlc{$43.88$\std{1.42}}
& \ctrlc{$27.65$\std{0.53}} & \ctrlc{---} \\
& \oursc{TrustMOPD}
& \oursc{$22.92$\std{1.40}} & \oursc{$20.69$\std{0.50}} & \oursc{$47.62$\std{1.41}}
& \oursc{$30.41$\std{0.60}} & \oursc{$+2.76$\std{0.26}} \\
\midrule
\multicolumn{7}{@{}p{\textwidth}@{}}{\emph{(c) Backbone $\to$ DeepSeek-R1-Distill-Qwen-7B: new student initialization, shared reference, and public RL teachers}} \\
\multicolumn{2}{@{}l}{\ctrlc{\textsc{Init} (its own student initialization)}}
& \ctrlc{$43.02$} & \ctrlc{$26.99$} & \ctrlc{$28.13$}
& \ctrlc{$32.71$} & \ctrlc{---} \\
\cmidrule{1-7}
\multirow{2}{*}{\textsc{SingleCap}}
& \ctrlc{\textsc{Uniform}}
& \ctrlc{$49.67$\std{3.11}} & \ctrlc{$30.00$\std{1.24}} & \ctrlc{$32.83$\std{1.70}}
& \ctrlc{$37.50$\std{1.33}} & \ctrlc{---} \\
& \oursc{TrustMOPD}
& \oursc{$52.87$\std{0.54}} & \oursc{$30.91$\std{0.72}} & \oursc{$32.91$\std{0.46}}
& \oursc{$38.90$\std{0.45}} & \oursc{$+1.40$\std{0.97}} \\
\cmidrule{1-7}
\multirow{2}{*}{\textsc{MultiCap}}
& \ctrlc{\textsc{Uniform}}
& \ctrlc{$50.55$\std{2.13}} & \ctrlc{$30.37$\std{1.03}} & \ctrlc{$31.68$\std{1.31}}
& \ctrlc{$37.53$\std{1.36}} & \ctrlc{---} \\
& \oursc{TrustMOPD}
& \oursc{$52.68$\std{0.71}} & \oursc{$31.42$\std{0.83}} & \oursc{$31.93$\std{1.28}}
& \oursc{$38.68$\std{0.83}} & \oursc{$+1.15$\std{0.66}} \\
\midrule
\multicolumn{7}{@{}p{\textwidth}@{}}{\emph{(d) Prompt source $\to$ independent corpus: student, teachers and \textsc{Init} as in the reference block}} \\
\multirow{2}{*}{\textsc{SingleCap}}
& \ctrlc{\textsc{Uniform}}
& \ctrlc{$19.30$\std{0.62}} & \ctrlc{$18.22$\std{0.47}} & \ctrlc{$44.12$\std{1.51}}
& \ctrlc{$27.22$\std{0.41}} & \ctrlc{---} \\
& \oursc{TrustMOPD}
& \oursc{$21.36$\std{1.44}} & \oursc{$19.69$\std{0.42}} & \oursc{$47.27$\std{1.04}}
& \oursc{$29.44$\std{0.77}} & \oursc{$+2.22$\std{0.61}} \\
\cmidrule{1-7}
\multirow{2}{*}{\textsc{MultiCap}}
& \ctrlc{\textsc{Uniform}}
& \ctrlc{$20.03$\std{0.98}} & \ctrlc{$18.47$\std{0.62}} & \ctrlc{$44.50$\std{1.29}}
& \ctrlc{$27.67$\std{0.46}} & \ctrlc{---} \\
& \oursc{TrustMOPD}
& \oursc{$22.33$\std{2.52}} & \oursc{$19.74$\std{0.48}} & \oursc{$46.78$\std{1.15}}
& \oursc{$29.62$\std{1.08}} & \oursc{$+1.95$\std{0.73}} \\
\bottomrule
\end{tabular}
\end{table}

Table~\ref{tab:apd_transfer} reports absolute scores for the transfer experiments in Figure~\ref{fig:robustness-generalization}(c,d).
The backbone-transfer setting uses DeepSeek-R1-Distill-Qwen-7B as the student initialization and shared reference, together with three public RL-finetuned teachers.
The data-source transfer setting retains the default models and uses independently sourced training prompts.

Relative to the matched \textsc{Uniform} controls,
TrustMOPD improves mean overall scores by $1.40$ and $1.15$ points on the additional backbone, and by $2.22$ and $1.95$ points on the independent training corpus, for \textsc{SingleCap} and \textsc{MultiCap}, respectively.
These gains show that the benefit of adaptive allocation extends to the additional backbone and data-source settings tested here.

\section{Computational Overhead}
\label{apd:cost}

Table~\ref{tab:apd_cost_train} reports GPU-hours and peak GPU memory for the SmolLM3-3B setting on $8\times$H20 GPUs, following the training protocol in \S\ref{apd:exp-training}.

\begin{table}[!t]
\centering
\footnotesize
\setlength{\tabcolsep}{5pt}
\caption{
\textbf{Computational cost of distillation and calibration.}
Measurements use $8\times$H20 GPUs under the protocol in \S\ref{apd:exp-training}. \emph{Scored models} counts distinct teacher and reference models evaluated per step, excluding the student.
For label-routed MOPD, each example uses one teacher, while mixed-domain batches invoke all three teachers.
\emph{Total GPU-h} is elapsed time multiplied by eight, summed over five training seeds and including evaluation. Peak memory is reported per GPU.
$^{\dagger}$Offline calibration is measured separately and performed once per prompt set; its percentage is relative to the reported TrustMOPD GPU-hour total.
}
\vspace{1mm}
\label{tab:apd_cost_train}
\begin{tabular}{@{}lccccc@{}}
\toprule
\multirow{2.4}{*}{Allocation rule}
& \multicolumn{2}{c}{Scored models / step}
& \multirow{2.4}{*}{\shortstack{Total\\GPU-h}}
& \multirow{2.4}{*}{\shortstack{GPU-h vs.\\\textsc{Uniform}}}
& \multirow{2.4}{*}{\shortstack{Peak mem\\(GB/GPU)}} \\
\cmidrule(lr){2-3}
& Teachers & Reference & & & \\
\midrule
\multicolumn{6}{@{}l}{\emph{Per-step training cost}} \\
Single teacher \quad $\mathbf{e}_{k_0}$
& $1$ & $0$ & $149$ & $-16.8\%$ & $69.2$ \\
MOPD \quad $\mathbf{e}_{d(x)}$
& $3$ & $0$ & $187$ & $+4.5\%$ & $79.8$ \\
\ctrlc{\textsc{Uniform} \quad $\tfrac{1}{K}\mathbf{1}$}
& \ctrlc{$3$} & \ctrlc{$0$} & \ctrlc{$179$} & \ctrlc{$\phantom{+}0.0\%$} & \ctrlc{$87.5$} \\
\textsc{Random} \quad $\operatorname{Unif}(\Delta^{K-1})$
& $3$ & $0$ & $183$ & $+2.2\%$ & $88.4$ \\
\oursc{TrustMOPD \quad $\propto\mathbf{r}_t^{\gamma}$}
& \oursc{$3$} & \oursc{$1$} & \oursc{$192$} & \oursc{$+7.3\%$} & \oursc{$92.6$} \\
\midrule
\multicolumn{6}{@{}l}{\emph{One-off cost, incurred once per prompt set before training}} \\
\oursc{Offline calibration of $\mu_k$ (8 batches)}
& \oursc{$3$} & \oursc{$1$} & \oursc{$1.3$} & \oursc{$+0.7\%^{\dagger}$} & \oursc{$92.6$} \\
\bottomrule
\end{tabular}
\end{table}

\paragraph{Training cost.}
Relative to \textsc{Uniform} and \textsc{Random}, which already score all $K$ teachers, TrustMOPD adds reference scoring and token-level allocation computations.
Compared with \textsc{Uniform}, the measured training-and-evaluation GPU cost increases by $7.3\%$, and peak memory increases by $5.8\%$.
Offline calibration costs an additional $1.3$ GPU-hours per prompt set, with the resulting scales reused across training seeds.

\paragraph{Deployment.}
Deployment uses only the distilled student.
Teachers, the reference, and the allocation procedure are not required, so TrustMOPD introduces no additional inference-time overhead relative to serving the same student architecture.

\section{Case Studies of Token-Level Supervision}
\label{apd:case}
We examine two positions in \textsc{MultiCap} rollouts where teacher preferences can be assessed using explicit local criteria: the result of a subtraction in Figure~\ref{fig:case-math}, and compliance with a no-comma constraint in Figure~\ref{fig:case-if}.
At each position, we compare token-level allocation with the response-level control defined in \eqref{eq:response-level-ablation}.

\paragraph{Shared setup.}
Both cases use SmolLM3-3B rollouts from the initial student at step 0.
Scores and weights are recomputed offline using the training implementation, with $\gamma=7$, $C=128$, and the frozen \textsc{MultiCap} calibration constants.
All reported probabilities and top-1 tokens are evaluated within the student's candidate set $\mathcal{A}_t$.
The target $\tilde q_w$ is the normalized geometric mixture in \eqref{eq:apd-normalized-mixture}; the response-level control uses the same rollout's mean weights $\bar w_k$.

\newcommand{\okmark}{\textcolor{okgreen}{\scriptsize\cmark}}
\newcommand{\nomark}{\textcolor{badred}{\scriptsize\xmark}}

\begin{figure*}[!t]
\centering
\begin{tcolorbox}[colback=white, colframe=black!70, boxrule=0.6pt,
  fonttitle=\bfseries\small, coltitle=white, colbacktitle=black!70,
  top=4pt, bottom=4pt, left=5pt, right=5pt,
  title={Case 1: local arithmetic supervision \hfill \normalfont SmolLM3-3B, step 0}]
{\footnotesize\textbf{Problem.} \textit{56 lines are drawn on a plane such that no three are concurrent. If the lines intersect at exactly 594 points, what is the maximum number of them that could have the same slope?}\par}
\vspace{3pt}
\noindent\colorbox{gray!10}{\parbox{\dimexpr\linewidth-2\fboxsep\relax}{\footnotesize
\textbf{Student prefix} (token $2526$, inside the reasoning block):\par
\texttt{\ldots 1558 - (Sum k\_i\texttwosuperior)/2 = 594}\par
\texttt{Therefore, (Sum k\_i\texttwosuperior)/2 = 1558 -594 =}\par
\vspace{2pt}
\textbf{Local criterion:} $1558-594=964$. Among the $128$ candidates, \texttt{"964"} and \texttt{"96"} match prefixes of this result; $39$ other digit tokens have incompatible prefixes.
}}

\vspace{6pt}
\begin{center}
\renewcommand{\arraystretch}{1.2}
\setlength{\tabcolsep}{4pt}
{\small
\begin{tabular}{@{} l l l c c c c @{}}
\toprule
\textbf{Teacher} & \textbf{Top-1 in $\mathcal{A}_t$} & & \textbf{P(matching prefix)} $\uparrow$ & \textbf{P(incompatible digits)} $\downarrow$ & $r_{k,t}$ & $w_{k,t}$ \\
\midrule
\rowcolor{oursblue}
\textcolor{mathteal}{math} & \texttt{"964"} & \okmark & \textbf{0.538} & \textbf{0.235} & \textbf{3.02} & \textbf{0.999} \\
\textcolor{codeorange}{code} & \texttt{"962"} & \nomark & 0.234 & 0.666 & 0.63 & 0.000 \\
\textcolor{ifgreen}{IF} & \texttt{"962"} & \nomark & 0.124 & 0.716 & 1.19 & 0.001 \\
\midrule
\textit{the student itself} & \texttt{"960"} & \nomark & 0.125 & 0.693 & & \\
\bottomrule
\end{tabular}

\vspace{6pt}
\begin{tabular}{@{} l c c @{}}
\toprule
\textbf{Distilled target $\tilde{q}_w$ at this position} & \textbf{Token-level} & \textbf{Response-level} \\
\midrule
Probability on the verified continuation \texttt{964} $\uparrow$ & \textbf{0.537} & 0.344 \\
\bottomrule
\end{tabular}
}
\end{center}

\vspace{4pt}
{\small
The mathematics teacher's top-1 token is \texttt{"964"},
whereas both other teachers prefer \texttt{"962"}.
Token-level allocation assigns the mathematics teacher
a weight of $0.999$.
The sampled student rollout emitted \texttt{"960"}
at this position.
}
\end{tcolorbox}
\caption{
\textbf{Case 1: supervision at a local arithmetic position.}
The displayed subtraction evaluates to $964$.
Matching-prefix mass sums the probabilities of
\texttt{"964"} and \texttt{"96"};
incompatible-digit mass covers the $39$ digit candidates
that cannot begin this result.
The mathematics teacher has the correct top-1 token
and receives the largest allocation weight.
}
\label{fig:case-math}
\end{figure*}

\begin{figure*}[!t]
\centering
\begin{tcolorbox}[colback=white, colframe=black!70, boxrule=0.6pt,
  fonttitle=\bfseries\small, coltitle=white, colbacktitle=black!70,
  top=4pt, bottom=4pt, left=5pt, right=5pt,
  title={Case 2: local compliance with a no-comma constraint \hfill \normalfont SmolLM3-3B, step 0}]
{\footnotesize \textbf{Problem.} A quadratic-graph question with instruction constraints. The constraint examined here is: \textit{``In your entire response refrain from the use of any commas.''} \par}
\vspace{3pt}
\noindent\colorbox{gray!10}{\parbox{\dimexpr\linewidth-2\fboxsep\relax}{\footnotesize
\textbf{Student prefix} (token $3608$, in the evaluated final response; the scored region starts at token $3384$):\par
\texttt{\ldots 3. **Determining the Vertex**:}\par
\texttt{\ \ \ - The vertex of the quadratic function \textbackslash( q(x) \textbackslash) is at \textbackslash( x = 7.5 \textbackslash}\par
\vspace{2pt}
\textbf{Decidable by the constraint:} \textbf{Local criterion:} a token containing a comma violates the constraint. Seven of the $128$ candidates contain commas, including \texttt{","} and \texttt{"),"}.
}}

\vspace{6pt}
\begin{center}
\renewcommand{\arraystretch}{1.2}
\setlength{\tabcolsep}{4pt}
{\small
\begin{tabular}{@{} l l l c c c @{}}
\toprule
\textbf{Teacher} & \textbf{Top-1 in $\mathcal{A}_t$} & & \textbf{P(forbidden)} $\downarrow$ & $r_{k,t}$ & $w_{k,t}$ \\
\midrule
\textcolor{mathteal}{math} & \texttt{"),"} & \nomark & 0.469 & 1.04 & 0.392 \\
\textcolor{codeorange}{code} & \texttt{"),"} & \nomark & 0.481 & 0.58 & 0.006 \\
\rowcolor{oursblue}
\textcolor{ifgreen}{IF} & \texttt{").\textbackslash n"} & \okmark & \textbf{0.337} & \textbf{1.11} & \textbf{0.602} \\
\midrule
\textit{the student itself} & \texttt{","} & \nomark & 0.493 & & \\
\bottomrule
\end{tabular}

\vspace{6pt}
\begin{tabular}{@{} l c c @{}}
\toprule
\textbf{Distilled target $\tilde{q}_w$ at this position} & \textbf{Token-level} & \textbf{Response-level} \\
\midrule
Probability on forbidden tokens $\downarrow$ & \textbf{0.390} & 0.406 \\
\bottomrule
\end{tabular}
}
\end{center}

\vspace{4pt}
{\small
The instruction-following teacher is the only teacher whose top-1 token contains no comma.
Its token-level weight is $0.602$, compared with a response mean of $\bar w_{\mathrm{IF}}=0.486$.
The token-level target assigns $0.390$ probability to comma-bearing tokens, compared with $0.406$ under response-level allocation.
}
\end{tcolorbox}
\caption{\textbf{Case 2: supervision under a no-comma constraint.}
Forbidden-token mass sums probabilities over the seven comma-bearing candidates.
Among the teachers, the instruction-following teacher assigns the least mass to these tokens and receives the largest allocation weight.
Token-level allocation reduces the target's forbidden-token mass relative to the response-level control.}
\label{fig:case-if}
\end{figure*}

These examples illustrate allocation favoring different specialists at different positions.
In both cases, the token-level target improves the reported local probability metric relative to the response-level control.
They provide qualitative evidence of alignment with local supervision quality; the aggregate performance comparison is reported in Table~\ref{tab:apd_ablation}.

\section{Limitations}
Our findings are established on 3B and 7B backbones with mathematics, code, and instruction-following specialists; larger models and broader domains remain directions for extending this evaluation. MultiCap enables controlled comparisons of capability composition, while its programmatic construction captures only part of the variation in naturally blended user requests. Our experiments support displacement-based allocation within the evaluated specialist pools. The theoretical utility-contrast interpretation assumes globally optimal KL-regularized teachers, while practical RL uses finite-budget optimization. Finally, the current formulation assumes a shared pre-RL reference and incurs reference-scoring overhead during training (Appendix~\ref{apd:cost}), with extensions to teachers of heterogeneous provenance left to future work.

\section{Additional Related Work}
\label{apd:broader-context}

\paragraph{Distillation objectives and training settings.}
Related KD methods refine divergence objectives or construct supervision targets~\citep{wu2025akl,ko2024distillm_opd,ko2025distillm2_opd,jin2026entropy_opd,jang2026stableopd_acl,zhang2026opsdl}.
ShortOPD adapts rollout horizons for recovery after structured pruning~\citep{zhang2026shortopd}.
For multimodal models, VA-OPD and Vision-OPD exploit visual advantage and crop-conditioned supervision, respectively~\citep{liu2026visual,yuan2026visionopd}, while Video-MOPD combines domain routing with reliability-based example selection~\citep{qin2026videomopd}.
These methods study objective design, trajectory generation, or modality-specific supervision, complementing TrustMOPD's focus on allocating supervision among teachers.

\paragraph{Alternative approaches to capability integration.}
LLM capabilities can be integrated through rationale transfer, distribution matching, or interaction-graph distillation~\citep{tian2025tinyllm,jin2026purification,wan2024fusellm,chen2024magdi}.
Parameter merging combines model weights or updates, while FuseChat combines distribution fusion with parameter merging~\citep{wortsman2022soups,ilharco2023taskarith,yadav2023ties,wan2025fusechat}.
Within multi-teacher OPD, DF-OPD constructs teacher-generated questions and derives routing labels from teacher identity~\citep{li2026datafreeopd}.
These approaches concern how capabilities or training data are assembled; TrustMOPD studies teacher contributions at sampled student states.

\paragraph{Tokenizer compatibility and distillation infrastructure.}
SimCT and DVLM align supervision across teacher and student tokenizations~\citep{sun2026simct,chen2026dualvocabularylanguagemodelcrosstokenizer}, while EasyOPD provides infrastructure supporting multiple OPD settings~\citep{sun2026easyopd}.
These efforts address supervision interfaces and implementation.
TrustMOPD currently assumes a shared vocabulary; combining its allocation rule with cross-tokenizer supervision remains future work.

\end{document}